%% file: main_arxiv.tex
\documentclass{article}

\usepackage{PRIMEarxiv}

\usepackage[T1]{fontenc}
\usepackage[authoryear,round]{natbib}
\usepackage{url}
\usepackage{booktabs}
\usepackage{array}
\usepackage{tabularx}
\usepackage{graphicx}
\usepackage{adjustbox}
\usepackage{amsmath,amssymb,amsfonts,mathtools}
\usepackage{bm}
\usepackage{amsthm}
\usepackage{microtype}
\usepackage{xcolor}
\usepackage{enumitem}
\usepackage{placeins}
\usepackage{float}
\usepackage{needspace}
\usepackage{algorithm}
\usepackage{algpseudocode}
\usepackage{hyperref}

\hypersetup{
  hypertexnames=false,
  colorlinks=true,
  linkcolor=blue!45!black,
  citecolor=blue!45!black,
  urlcolor=blue!45!black,
  pdftitle={From Truncation to Commitment: Persistent Context in Uniform Discrete Diffusion},
  pdfauthor={Satoshi Hayakawa}
}

\newcommand{\KL}{D_{\mathrm{KL}}}
\newcommand{\kldiv}{\,\Vert\,}
\newcommand{\TV}{\operatorname{TV}}
\newcommand{\TC}{\operatorname{TC}}
\newcommand{\Ent}{\operatorname{Ent}}
\newcommand{\E}{\mathbb E}
\newcommand{\1}{\mathbf 1}
\newcommand{\CRS}{\textsc{CRS}}

\newtheorem{theorem}{Theorem}
\newtheorem{proposition}[theorem]{Proposition}
\newtheorem{corollary}[theorem]{Corollary}
\newtheorem{lemma}[theorem]{Lemma}
\theoremstyle{definition}

\newcolumntype{Y}{>{\raggedright\arraybackslash}X}

\title{From Truncation to Commitment:\\
Persistent Context in Uniform Discrete Diffusion}

\author{
  Satoshi Hayakawa \\
  The University of Tokyo \\
  \texttt{hayakawa@mist.i.u-tokyo.ac.jp}
}

\begin{document}
\raggedbottom

\maketitle
\input{paper}

\section*{Acknowledgments}
The author thanks Chunsan Hong for helpful comments and pointers to relevant literature.
The author received access to ChatGPT and Codex through OpenAI's
ChatGPT for Academic Researchers program.

\clearpage
\bibliographystyle{plainnat}
\bibliography{references}

\clearpage
\appendix
\input{appendix}

\end{document}

%% file: paper.tex
\begin{abstract}
Uniform-state discrete diffusion models update all tokens in parallel while
keeping every position revisable. Even when the commonly used top-$p$
rule leaves only one candidate at a position, that choice affects
only the current reverse step and can be revised at the next sampling step. We ask
what changes when selected hypotheses instead become persistent context for
later predictions. We therefore propose \emph{committed reveal sampling}
(\CRS), a training-free sampler that stores selected
argmax tokens and inserts them into subsequent model inputs.
Our analysis gives a rationale for selecting later and for keeping selected
tokens visible. Under the exact forward process, the Bayes error of selecting a
clean token cannot increase as noise decreases, while in a simple latent-mode
model, keeping the selected token visible helps
later parallel predictions agree on the same sequence-level choice.
Empirically, paired experiments on Duo-distilled then separate
this persistent effect from single-step top-$p$ restriction and scalar
temperature scaling. Under the same finalization rule, \CRS{} without top-$p$
truncation reaches lower generative perplexity (GenPPL) than fixed $p=0.95$ and
$p=0.9$ baselines across budgets of 8--64 function evaluations (NFE). At 64 NFE, the
comparison at matched unigram entropy also gives lower GenPPL for \CRS{},
yielding a more favorable GenPPL--entropy tradeoff. Base Duo shows the same
direction in a descriptive comparison, while other diversity and continuation
metrics can rank these operating points differently. These results identify support
restriction and persistent context as distinct controls of that tradeoff.
\end{abstract}

\section{Introduction}
\label{sec:intro}

Discrete diffusion models generate a sequence through repeated parallel token
updates~\citep{austin2021structured,campbell2022continuous,lou2024discrete,
sahoo2024simple}. Masked models gradually replace an absorbing mask with visible
tokens, and each reveal supplies context to later updates and fixes part of the
generation order~\citep{chang2022maskgit,hayakawa2026demystifying}. In contrast,
uniform-state models such as Duo use vocabulary symbols as noisy states and
keep every position revisable~\citep{sahoo2025duo}. Such revisability supports
continued sequence-wide revision, but removes the explicit record of earlier
decisions.

In practice, uniform samplers often restrict their predictions over clean
tokens before each reverse update. The released Duo sampler uses top-$p$, or
nucleus, sampling to restrict its clean-token predictions before each reverse
update~\citep{holtzman2020curious,deschenaux2026psi}. Top-$p$ narrows the
clean-token support used by a single update. Late in the reverse process, this
support often becomes a singleton, but even then the selected label is not a
reveal: it is used for that update only, and the next sampling step recomputes its
prediction from a revisable noisy state.

This temporal limitation motivates \emph{committed reveal sampling} (\CRS), which uses selected clean-token hypotheses as persistent context. \CRS{} is a sampling algorithm for frozen model parameters that wraps the positionwise uniform-diffusion denoiser. After a warmup with the native revisable sampler, it selects positions with concentrated predictions, stores their argmax tokens, and writes those tokens into later model inputs; unselected positions continue to follow the native sampler.

The central intuition is that parallel updates can combine individually plausible tokens from incompatible sequence-level alternatives. A reliable visible token can reduce this ambiguity and steer later predictions toward the same alternative, whereas an incorrect selection can steer them toward the wrong one. This raises two questions: when should a token be selected, and how does keeping it visible affect later predictions?

We study these questions theoretically and empirically. Theoretically, under the exact forward process, the optimal clean-token selection error cannot increase as noise decreases, providing a reason for the initial warmup phase; a latent-mode analysis then shows how persistent visibility can coordinate later predictions around the selected alternative. Empirically, paired experiments compare CRS with top-$p$ samplers and measure the resulting tradeoff between generative perplexity (GenPPL) and unigram entropy across compute budgets on the base and distilled Duo checkpoints \citep{sahoo2025duo}.

The paper makes three contributions.
\begin{enumerate}[leftmargin=*,itemsep=2pt,topsep=3pt]
  \item \textbf{Persistence and support restriction.} We distinguish the support restriction induced by top-$p$, which applies only to the current sampling step, from persistent context that remains visible in later steps. This distinction motivates \CRS{}, which uses selected clean-token hypotheses as persistent context to guide later token predictions.
  \item \textbf{Theoretical analysis.} Our analysis provides a rationale for two design choices in \CRS{}: warmup phase before token selection and keeping selected tokens visible afterward. Under the exact forward process, waiting until the state is less noisy cannot increase the optimal clean-token selection error (Proposition~\ref{prop:warmup-blackwell}). In a simple latent-mode model, persistent visible context helps later positionwise predictions favor the same sequence-level alternative (Theorem~\ref{thm:shared-mode}).
  \item \textbf{Experiments.} Under a common finalization rule, \CRS{} without
  top-$p$ truncation has lower GenPPL than fixed global
  $p\in\{0.95,0.9\}$ baselines on Duo-distilled across 8--64 NFE. Base Duo
  shows the same ordering in a descriptive comparison. At 64 NFE,
  matched-entropy comparisons also favor \CRS{} on Duo-distilled.
\end{enumerate}

\section{Related Work}
\label{sec:related-work}

\paragraph{Uniform-state discrete diffusion.}
Duo and its leave-one-out (LOO) analysis define the uniform reverse interface used here
\citep{sahoo2025duo,gourevitch2026uniform}.
\mbox{\citet{deschenaux2026psi}} report Duo top-$p$ sweeps with an ancestral endpoint while developing predictor--corrector samplers. Our study asks a different question: we compare support restriction applied only in the current sampling step with persistent context visible in later steps, using the same coordinatewise argmax decoder in the final step for all methods. Because the endpoint protocols differ, the resulting values are not directly comparable.

\paragraph{Joint distribution modeling.}
The native parallel interface constructs each update from a product of coordinatewise predictions; we refer to this interface as a product denoiser. Prior work enriches this interface through correlation-aware mixtures, energy corrections, coupled transitions, or auxiliary simplex-valued reverse states
\citep{hayakawa2025distillation,xu2024edlm,li2026codd,sakurai2026simplax}.
These approaches modify the learned reverse process, its training procedure, or its state representation to capture dependencies beyond coordinatewise product predictions. \CRS{} instead keeps the checkpoint fixed and changes the context available across sampling steps by exposing selected clean-token hypotheses to later denoiser inputs.

\paragraph{Masked diffusion and commitment.}
Masked diffusion turns an absorbing mask into visible clean tokens, so reveal
order is part of the sampler~\citep{sahoo2024simple,shi2024simplified}.
MaskGIT and later methods select or remask positions using confidence or entropy
\citep{chang2022maskgit,hayakawa2026demystifying,benhamu2025entropy,
wang2025remasking}. Fast-dLLM commits confident tokens for parallel decoding
and cache reuse~\citep{wu2025fastdllm}, while deferred commitment postpones
uncertain blockwise decisions~\citep{shu2026deferred}. Other policies use future self-consistency, trajectory stability, or coordination among dependent token values to determine which token predictions to commit and when
\citep{wang2026selfcontained,sun2026pathmatters,wang2026tacg,yao2026cocommit}.
\CRS{} asks a different question: what happens when a stored clean-token hypothesis remains visible across steps in an otherwise fully revisable uniform-state sampler?

\paragraph{Generalized and larger-scale models.}
The masked-diffusion view also underlies large-scale models such as LLaDA and
Dream~\citep{nie2025llada,ye2025dream}.
GIDD generalizes discrete diffusion to interpolating kernels that can combine
masking and uniform noise~\citep{vonrutte2025gidd}. Sumi scales a pure-uniform
instantiation of this framework with a time-agnostic bidirectional
Transformer~\citep{ye2026sumi}. They provide a complementary testbed to our
explicitly time-indexed Duo model and its LOO prediction
interface. Extending persistent-context controls to that setting is a natural
next step.

\section{Uniform Reverse Sampling and
\texorpdfstring{Top-$p$ Support Restriction}{Top-p Support Restriction}}
\label{sec:uniform-reverse}

We first derive the native reverse interface and locate top-$p$ within one
reverse step. This identifies which clean-token distribution top-$p$ restricts
and why even a singleton choice is not carried explicitly into the next
step. Section~\ref{sec:persistent-context} then introduces persistent context
at that interface.

\subsection{From the LOO law to the native reverse step}

Let $\mathcal S=[V]=\{1,\ldots,V\}$ be the vocabulary and
$[L]=\{1,\ldots,L\}$ be the set of positions. Let $q_0$ be the data distribution on
$\mathcal S^L$, and let $(\bm X_t)_{0\le t\le T}$ be a coordinatewise Markov forward process whose transition kernels belong to the uniform-corruption family. In particular,
$\bm X_0\sim q_0$ and
$\bm X_t\mid\bm X_0\sim q_{t\mid0}(\cdot\mid\bm X_0)$. For uniform
corruption, this conditional law factorizes as
\begin{equation}
 q_{t\mid0}(\bm x_t\mid\bm x_0)
 =\prod_{i=1}^Lq_{t\mid0}^i(x_t^i\mid x_0^i),
 \qquad
 q_{t\mid0}^i(z\mid x)
 =(1-\alpha_t)\1\{z=x\}+\frac{\alpha_t}{V}.
 \label{eq:uniform-forward-main}
\end{equation}
Here $\alpha_t$ is monotonically increasing,
with $\alpha_t=0$ giving clean data and $\alpha_t=1$ giving uniform noise,
and $\bm x=(x^i)_{i=1}^L$. Let $\bm x^{-i}$ denote the sequence with
coordinate $i$ removed.

For $0\le\alpha_s<\alpha_t\le1$, Markov consistency requires
$q_{t|0}^i(z\mid x) = \sum_y q_{t|s}^i(z\mid y)q_{s|0}^i(y\mid x)$.
Since uniform channels are closed under composition, $q_{t|s}$ is given as
\begin{equation}
 q_{t\mid s}^i(z\mid y)
 =(1-\beta_{t,s})\1\{z=y\}+\frac{\beta_{t,s}}{V},
 \qquad
 \beta_{t,s}=\frac{\alpha_t-\alpha_s}{1-\alpha_s}.
 \label{eq:intermediate-uniform-main}
\end{equation}
The reverse update also needs the earlier noisy token conditioned on both the
clean token and the current noisy token. 
Bayes' rule gives this exact coordinate bridge:
\begin{equation}
 q_{s\mid0,t}^i(y\mid x,z)
 =\frac{q_{s\mid0}^i(y\mid x)q_{t\mid s}^i(z\mid y)}
 {q_{t\mid0}^i(z\mid x)}.
 \label{eq:bridge-main}
\end{equation}

The released Duo model takes the full noisy sequence $\bm x_t$ as input and
outputs, at each coordinate $i$, a normalized categorical law
$h_{\theta,t}^i(\cdot\mid\bm x_t)$ over clean labels
\citep{sahoo2025duo}. Subsequent analysis by
\citet{gourevitch2026uniform} showed that the corresponding population target
is the leave-one-out (LOO) conditional, which predicts the clean token at
coordinate $i$ from all other noisy coordinates:
\begin{equation}
 q_{0\mid t}^i(x\mid\bm x_t^{-i})
 :=\mathbb P(X_0^i=x\mid\bm X_t^{-i}=\bm x_t^{-i})
 =\mathbb E\!\left[
 q_{0\mid t}^i(x\mid\bm X_t)
 \,\middle|\,
 \bm X_t^{-i}=\bm x_t^{-i}
 \right].
 \label{eq:loo-main}
\end{equation}
We therefore interpret $h_{\theta,t}^i$ as a learned approximation to this
population target, while allowing a finite checkpoint to retain some
dependence on the $i$-th coordinate of input $\bm{x}$.

To use the LOO law in the reverse update, we first convert it to the ordinary
denoising posterior. Bayes' rule gives
\begin{equation}
 q_{0\mid t}^i(x\mid\bm x_t)
 =\frac{q_{0\mid t}^i(x\mid\bm x_t^{-i})q_{t\mid0}^i(x_t^i\mid x)}
 {\sum_{\bar x}q_{0\mid t}^i(\bar x\mid\bm x_t^{-i})q_{t\mid0}^i(x_t^i\mid\bar x)}
 =:\bigl(\mathcal C_{t,x_t^i}q_{0\mid t}^i(\cdot\mid\bm x_t^{-i})\bigr)(x),
 \label{eq:conversion-main}
\end{equation}
where, for any categorical law $h$,
$(\mathcal C_{t,z}h)(x):=
h(x)q_{t\mid0}^i(z\mid x)/
\sum_{\bar x}h(\bar x)q_{t\mid0}^i(z\mid\bar x)$.
Indeed, conditional on $X_0^i=x$, corruption at coordinate $i$ is independent
of $\bm X_t^{-i}$ and has law $q_{t\mid0}^i(\cdot\mid x)$.

Finally, marginalizing the bridge under this ordinary denoising posterior
gives the exact reverse marginal
\begin{equation}
 q_{s\mid t}^i(y\mid\bm x_t)
 =\sum_x q_{s\mid0,t}^i(y\mid x,x_t^i)
 q_{0\mid t}^i(x\mid\bm x_t).
 \label{eq:exact-reverse-main}
\end{equation}
The learned reverse transition $\pi_{s|t}$ substitutes the converted learned law for the
exact posterior:
\begin{equation}
 \pi_{s\mid t}^i(y\mid\bm x_t)
 :=\sum_x q_{s\mid0,t}^i(y\mid x,x_t^i)
 \bigl(\mathcal C_{t,x_t^i}h_{\theta,t}^i
 (\cdot\mid\bm x_t)\bigr)(x),
 \label{eq:learned-reverse-main}
\end{equation}
where the dependence on $\theta$ is omitted from $\pi_{s|t}$ for simplicity.
Equations~\eqref{eq:conversion-main} and~\eqref{eq:learned-reverse-main} separate
three objects: the learned law over clean labels, its local conversion to an
ordinary denoising posterior, and the resulting learned reverse transition,
from which the next noisy token is sampled.

\subsection{\texorpdfstring{Top-$p$}{Top-p} in the LOO distribution}
\label{sec:top-p-loo}

For a categorical law $h$, write
$h(A):=\sum_{a\in A}h(a)$ and, whenever $h(A)>0$,
\begin{equation}
 (\mathsf T_Ah)(a):=\frac{h(a)\1\{a\in A\}}{h(A)}.
 \label{eq:general-restriction}
\end{equation}
Sort the vocabulary as
$h(a_1)\ge\cdots\ge h(a_V)$ with a fixed tie rule. The top-$p$ support
induced by $h$ is
\begin{equation}
  \mathsf T_ph:=\mathsf T_{A_p(h)}h, \quad
  \text{where}\ 
 A_p(h)=\{a_1,\ldots,a_{k_p(h)}\},
 \ 
 k_p(h)=\min\Bigl\{k:\sum_{j=1}^k h(a_j)\ge p\Bigr\}.
 \label{eq:top-p-support-main}
\end{equation}
Top-$p$ acts within one reverse step by substituting $\mathsf T_p h$ for $h$ in
Equation~\eqref{eq:learned-reverse-main} \citep{holtzman2020curious,hewitt2022desmoothing}; the resulting transition samples the
next noisy state (Figure~\ref{fig:concept}(a)). Even when $A_p(h)$ is a
singleton, its label is absent from the next denoiser input unless the noisy
state happens to preserve it.
Empirically, at 64 NFE on Duo-distilled with $p=0.9$, the top-$p$ support is a singleton on
$27.5\%$ of active coordinate--step pairs overall and $45.6\%$ in the final ten
reverse steps (Appendix~\ref{app:singleton-trace}).

The conversion in Equation~\eqref{eq:conversion-main} preserves hard support
restriction. For every retained set $A$ with positive normalizers,
$
 \mathcal C_{t,x_t^i}(\mathsf T_Ah)
 =\bigl(\mathcal C_{t,x_t^i}h\bigr)(\cdot\mid A)
$ 
holds.
Indeed, writing $Q=\mathcal C_{t,x_t^i}h$, direct substitution gives
$\mathcal C_{t,x_t^i}(\mathsf T_Ah)(a)=Q(a)\1\{a\in A\}/Q(A)$, which is
$Q(\cdot\mid A)$. Thus top-$p$ is hard support restriction at the clean-label
interface used by the reverse bridge.
In the singleton limit it makes a point decision for the current reverse step, while the
state update can remain stochastic and the point label is recomputed at the
next step.

\begin{figure}[H]
  \centering
  \includegraphics[width=0.98\linewidth]{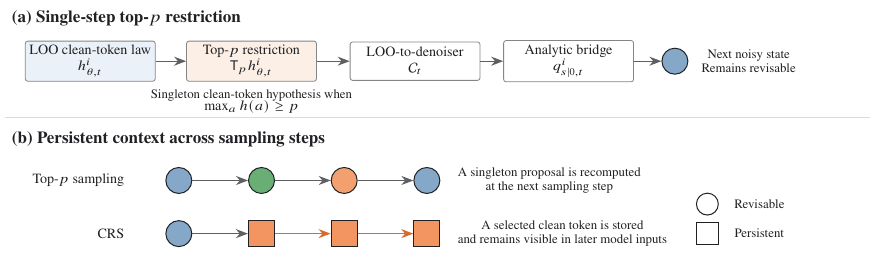}
  \caption{Support restriction and persistent context act at different points.
  Top-$p$ restricts the clean-token distribution used by one reverse update.
  \CRS{} stores selected clean tokens (squares) and inserts them into later
  denoiser inputs. Other positions remain revisable. Shape denotes whether a
  position remains revisable; colors illustrate current token values.}
\label{fig:concept}
\end{figure}

\section{From Point Hypotheses to Persistent Context}
\label{sec:persistent-context}

Top-$p$ can collapse the clean-token support in one reverse step to a point,
but does not retain that hypothesis as explicit context: the next denoiser
receives only the updated noisy state. A persistent sampler instead keeps
selected clean-token values explicit in later denoiser inputs. \CRS{} does so
while leaving all unselected positions revisable, paralleling the use of
revealed tokens as later context in masked diffusion
\citep{chang2022maskgit,hayakawa2026demystifying}.

\subsection{Committed reveal sampling}
\label{sec:crs-method}

Let $C\subset[L]$ be the selected coordinates and
$\bm c=(c_i)_{i\in C}$ their stored labels. We write
$\bm x[C\leftarrow\bm c]$ for replacing those coordinates in a model input.
During an initial warmup phase, \CRS{} runs the native sampler without storing
labels. It then distributes a total confidence-deficit budget $D_{\max}$ over
the remaining steps. Selecting coordinate $i$ incurs the deficit
$1-\max_a h_{\theta,t}^i(a\mid\widetilde{\bm x}_t)$.
After the warmup phase, each sampling step of \CRS{} is as follows:
\begin{enumerate}[leftmargin=*,itemsep=1pt,topsep=2pt]
  \item evaluate the denoiser on
  $\widetilde{\bm x}_t:=\bm x_t[C\leftarrow\bm c]$, so labels stored in earlier
  steps are visible;
  \item use this output for the native reverse transition, scan the currently
  unselected coordinates from highest to lowest confidence, and greedily
  select a spatially separated batch within the budget allocated to that
  step; and
  \item store the argmax label at each newly selected coordinate and update
  $(C,\bm c)$, making the selected labels visible in subsequent denoiser
  evaluations.
\end{enumerate}

We form each batch using a fixed local-exclusion rule;
Appendix~\ref{app:crs-algorithm} gives the exact constraint and connects it to
spatially dispersed scheduling
\citep{besnier2025halton,hayakawa2026demystifying}.
For each selected coordinate, \CRS{} stores the argmax of its clean-token
prediction. This value coincides with the unique retained token whenever
top-$p$ produces a singleton support. More generally, argmax is the
deterministic choice that minimizes coordinatewise Hamming risk under the
denoiser output:
\begin{equation}
 \arg\min_{c\in\mathcal S}
 \mathbb E_{X\sim h}\!\left[\1\{X\ne c\}\right]
 =\arg\max_{c\in\mathcal S}h(c).
 \label{eq:argmax-hamming-main}
\end{equation}
Sampling from the top-$p$-restricted distribution when its support is not a
singleton, and then storing that draw, would combine token-sampling randomness
with contextual feedback. \CRS{} therefore uses argmax to isolate the latter.
Indeed, at $p=0.9$ and 64 NFE it matches top-$p$ on the $45.6\%$ of active
coordinate--step pairs with singleton support in the final ten reverse steps, and
Equation~\eqref{eq:argmax-hamming-main} gives its coordinatewise Hamming-risk
justification otherwise. The full selector and pseudocode, together with
experiments that limit how long stored labels remain visible, appear in
Appendix~\ref{app:crs-algorithm}.

\paragraph{Common final-step decoder.}
The released Duo sampler uses an ancestral endpoint
\citep{sahoo2025duo,deschenaux2026psi}. To compare support restriction and
persistent context under a shared output rule, our experiments instead count a
coordinatewise argmax readout as the final one of $N$ sampling steps. Each
step uses one denoiser evaluation, so NFE is $N$. Let
$\bm x_{\varepsilon}$ be the last noisy state on
the grid $t_0>\cdots>t_{N-1}=\varepsilon>0$. The first $N-1$ steps produce
stochastic reverse transitions, and the final step returns
$\widehat x^i=\arg\max_{a\in\mathcal S}
h_{\theta,\varepsilon}^i(a\mid\widetilde{\bm x}_{\varepsilon})$ at every
generated coordinate. This comparison protocol uses the model's clean-token
output directly rather than sampling another reverse transition.

For $A\subseteq C$, let $\bm c|_A=(c_i)_{i\in A}$ denote the corresponding
restriction of the stored labels. For CRS,
$\widetilde{\bm x}_{\varepsilon}=\bm x_{\varepsilon}
[C_{\mathrm{vis}}\leftarrow\bm c|_{C_{\mathrm{vis}}}]$, where
$C_{\mathrm{vis}}\subseteq C$ indexes coordinates
with stored labels still visible. The selected tokens appear in this final
model input but are never copied directly to the output. We restore fixed
conditioning positions separately.

\subsection{Why persistent context can coordinate parallel updates}
\label{sec:theory}

Parallel product updates can combine individually plausible tokens from
incompatible sequence-level alternatives. Correlation-aware denoisers and
joint transitions address this by modifying dependence within a step
\citep{hayakawa2025distillation,xu2024edlm,li2026codd}. \CRS{} instead retains
a product output and changes later predictions through visible context.

To isolate this effect, consider two binary coordinates with clean law
$Q_\eta=(1-\eta)\delta_{00}+\eta\delta_{11}$ for $0<\eta<1/2$.
The product of its marginals assigns total mass $2\eta(1-\eta)$ to the hybrid
strings $01$ and $10$, while $Q_\eta$ assigns them zero mass. For
$X=(X^1,X^2)\sim Q_\eta$, coordinatewise argmax selects $00$, with
sequence-level error $\mathbb P(X\ne00)=\eta$. If a selected value
$c_1\in\{0,1\}$ remains visible at the first coordinate, then
$Q_\eta(X^2=c_1\mid X^1=c_1)=1$; the other coordinate is symmetric.
Persistence, rather than argmax alone, carries this mode evidence into later
predictions.

We now generalize this mechanism. Let a latent mode
$M\in[K]$ have prior weights $w_m=\mathbb P(M=m)$. Conditional on $M=m$,
let the future coordinates $Y=(Y_1,\ldots,Y_n)$ be independent, with laws
$P_{m,i}$. Define
\begin{equation}
 Q_{\mathrm{mix}}^w(y)=\sum_{m=1}^K w_m\prod_{i=1}^nP_{m,i}(y_i),
 \qquad
 Q_{\mathrm{prod}}^w(y)=\prod_{i=1}^n\sum_{m=1}^K w_mP_{m,i}(y_i).
 \label{eq:shared-mode-laws}
\end{equation}
The mixture uses one shared mode, whereas the product law independently
remixes its coordinate marginals. Here $\KL$ denotes Kullback--Leibler (KL)
divergence, $\TC$ denotes total correlation~\citep{watanabe1960information}, a
standard measure of residual multivariate dependence whose conditional variants
have also been used in discrete diffusion~\citep{yoo2025redi}, and
$\Ent(w):=-\sum_m w_m\log w_m$. We also write
$h_2(a):=-a\log a-(1-a)\log(1-a)$.

\begin{theorem}[Shared-mode coordination]
\label{thm:shared-mode}
For the laws in Equation~\eqref{eq:shared-mode-laws},
\begin{equation}
 \KL(Q_{\mathrm{mix}}^w\kldiv Q_{\mathrm{prod}}^w)
 =\TC_{Q_{\mathrm{mix}}^w}(Y_1,\ldots,Y_n)
 \le(n-1)\Ent(w).
 \label{eq:shared-mode-gap}
\end{equation}
Let $U$ be visible context with $Y\perp\!\!\!\perp U\mid M$. For any
positive-probability value $u$, let $w_m[u]=\mathbb P(M=m\mid U=u)$ and
$\epsilon(u):=1-\max_m w_m[u]$. Then
\begin{equation}
 \KL(Q_{\mathrm{mix}}^{w[u]}\kldiv Q_{\mathrm{prod}}^{w[u]})
 \le(n-1)\{h_2(\epsilon(u))+\epsilon(u)\log(K-1)\}.
 \label{eq:context-mode-gap}
\end{equation}
\end{theorem}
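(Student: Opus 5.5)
The plan has three steps: identify the KL divergence with total correlation, bound the total correlation by conditioning on the latent mode, and then apply the same bound under the posterior $w[u]$ with Fano's inequality.

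\textbf{Step 1: KL equals total correlation.} The $i$-th marginal of $Q_{\mathrm{mix}}^w$ is $\sum_m w_m P_{m,i}(y_i)$. Hence $Q_{\mathrm{prod}}^w$ is exactly the product of the marginals of $Q_{\mathrm{mix}}^w$. By the definition of total correlation,
\[
\TC_{Q}(Y_1,\ldots,Y_n)=\KL\Bigl(Q\kldiv \textstyle\prod_i Q_i\Bigr)=\sum_i H(Y_i)-H(Y),
\]
so the identity in Equation~\eqref{eq:shared-mode-gap} is immediate.

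\textbf{Step 2: bound via the latent mode.} Work on the joint law of $(M,Y)$, with $M\sim w$ and $Y\mid M=m\sim\prod_i P_{m,i}$. Conditional independence gives $H(Y\mid M)=\sum_i H(Y_i\mid M)$. We also have $H(Y)\ge H(Y\mid M)$. For each $i$, the chain rule gives
\[
H(Y_i)=H(Y_i\mid M)+I(Y_i;M)\le H(Y_i\mid M)+H(M).
\]
Combining these,
\[
\TC=\sum_i H(Y_i)-H(Y)\le \sum_i I(Y_i;M)-I(Y;M).
\]
To reach the factor $n-1$ rather than $n$, use $I(Y;M)\ge I(Y_j;M)$ for any single $j$, which follows from the chain rule and nonnegativity. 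Then
\[
\TC\le\sum_{i\ne j}I(Y_i;M)\le(n-1)H(M)=(n-1)\Ent(w).
\]
I expect this to be the only delicate point: one must keep $-I(Y;M)$ and absorb one of the $n$ terms, rather than bounding crudely by $n\Ent(w)$. Everything is finite here since $\mathcal S$ and $[K]$ are finite, so all entropies are well defined.

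\textbf{Step 3: conditioning on context.} Because $Y\perp\!\!\!\perp U\mid M$, the conditional law of $(M,Y)$ given $U=u$ has $M\sim w[u]$ and $Y\mid M=m$ still distributed as $\prod_i P_{m,i}$. The law of $Y$ given $U=u$ is therefore $Q_{\mathrm{mix}}^{w[u]}$, and Steps~1--2 apply verbatim with $w$ replaced by $w[u]$. This gives
\[
\KL\bigl(Q_{\mathrm{mix}}^{w[u]}\kldiv Q_{\mathrm{prod}}^{w[u]}\bigr)\le(n-1)\Ent(w[u]).
\]

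\textbf{Step 4: Fano-type bound on the entropy.} It remains to show $\Ent(v)\le h_2(\epsilon)+\epsilon\log(K-1)$ for a distribution $v$ on $[K]$ with $\epsilon=1-\max_m v_m$. Let $m^\ast$ be a maximizing index and $B=\1\{M\ne m^\ast\}$. Then
\[
\Ent(v)=H(B)+H(M\mid B)=h_2(\epsilon)+\epsilon\, H(M\mid M\ne m^\ast)+(1-\epsilon)\cdot 0.
\]
The conditional entropy satisfies $H(M\mid M\ne m^\ast)\le\log(K-1)$, since that conditional law lives on $K-1$ atoms. When $\epsilon=0$ the bound holds trivially with the conventions $0\log0=0$ and $h_2(0)=0$; the case $K=1$ is likewise trivial. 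This yields Equation~\eqref{eq:context-mode-gap}.
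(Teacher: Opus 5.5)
Your proof is correct. Steps 1, 3 and 4 coincide with the paper's argument:
\begin{itemize}
\item $Q_{\mathrm{prod}}^w$ is the product of the mixture's marginals;
\item under $Y\perp\!\!\!\perp U\mid M$, conditioning on $U=u$ preserves the mixture-of-products form with weights $w[u]$;
\item splitting off the largest mode bounds $\Ent(w[u])$.
\end{itemize}

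The real difference is how you obtain the factor $n-1$. The paper expands total correlation by the chain rule, $\TC=\sum_{i=2}^n I(Y_i;Y_{<i})$, so $n-1$ comes from the vanishing $i=1$ term. It then bounds each term by $I(Y_i;M)$ using data processing along $Y_{<i}\to M\to Y_i$.

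You instead derive the exact identity $\TC=\sum_{i=1}^n I(Y_i;M)-I(Y;M)$ and absorb one summand via $I(Y;M)\ge I(Y_j;M)$. The resulting upper bounds are identical (take $j=1$). Your identity buys a little more: since $I(Y;M)\le\Ent(w)$, it also gives the lower bound $\TC\ge\sum_i I(Y_i;M)-\Ent(w)$. This shows directly that the $(n-1)\Ent(w)$ bound is attained when every coordinate determines the mode, as in the two-coordinate $Q_\eta$ example. The paper's route is slightly shorter but needs an extra step to see tightness.

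One presentational point: your ``Combining these'' line is actually an equality. It follows from $H(Y)=H(Y\mid M)+I(Y;M)$ together with $H(Y\mid M)=\sum_i H(Y_i\mid M)$, not from the inequality $H(Y)\ge H(Y\mid M)$ that you quote just before it. Relying on that inequality would discard exactly the $-I(Y;M)$ term you need to get $n-1$ rather than $n$. State the identity explicitly.
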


Appendix~\ref{app:shared-mode-theorem-proof} gives the proof.
A mode-consistent visible label concentrates $w[u]$ and drives the bound
toward zero; a poor label can instead favor the wrong mode. The appendix gives
an explicit likelihood-ratio condition and extends the calculation to combined
support restriction and persistent evidence
(Appendix~\ref{app:combined-evidence}).

\subsection{Why wait before selecting tokens?}

The CRS warmup introduced in Section~\ref{sec:crs-method} changes two
quantities: it can make the selected value more reliable,
and it leaves fewer later steps in which that value can affect the path, a
tension also reflected in recent confidence- and trajectory-based commitment
policies~\citep{shu2026deferred,sun2026pathmatters}. The
following proposition isolates the first effect at an exact forward state. For
coordinate $i$, define the oracle LOO Hamming Bayes risk at noise level $t$ by
\begin{equation}
 \mathcal R_t^i
 :=\mathbb E\!\left[1-\max_{a\in\mathcal S}
 q_{0\mid t}^i(a\mid\bm X_t^{-i})\right].
 \label{eq:loo-warmup-risk}
\end{equation}

\begin{proposition}[Lower noise weakly reduces oracle selection error]
\label{prop:warmup-blackwell}
For the coordinate-factorized uniform forward process and noise levels
$0\le\alpha_s<\alpha_t\le1$,
$\mathcal R_s^i\le\mathcal R_t^i$ for every coordinate $i$.
\end{proposition}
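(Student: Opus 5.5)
The plan is a Blackwell-style garbling argument. Markov consistency (Equation~\eqref{eq:intermediate-uniform-main}) lets us realize $\bm X_t$ as a coordinatewise garbling of $\bm X_s$. We then show that the LOO Bayes risk can only grow under garbling of the observation.

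\textbf{Step 1: the coupling.} Construct $(\bm X_0,\bm X_s,\bm X_t)$ on one probability space. Draw $\bm X_0\sim q_0$, then $\bm X_s\mid\bm X_0\sim q_{s\mid0}$, then $\bm X_t\mid\bm X_s$ coordinatewise from $q^j_{t\mid s}$ with independent randomness. By Equation~\eqref{eq:intermediate-uniform-main} and the factorization \eqref{eq:uniform-forward-main}, the pair $(\bm X_0,\bm X_t)$ has the correct joint law. Hence $\mathcal R_t^i$, which depends only on the joint law of $(X_0^i,\bm X_t^{-i})$, can be computed in this coupling. Because the garbling is coordinatewise, $\bm X_t^{-i}$ is generated from $\bm X_s^{-i}$ alone plus independent noise. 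This gives the Markov chain
\[
X_0^i \to \bm X_s^{-i} \to \bm X_t^{-i}.
\]
The main point to check is that $\bm X_t^{-i}$ does not depend on $X_s^i$ or $X_0^i$ given $\bm X_s^{-i}$. This follows from independence of the coordinatewise channels.

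\textbf{Step 2: Bayes risk as a minimum over decision rules.} Write
\[
1-\mathcal R^i_t=\mathbb E\bigl[\max_a \mathbb P(X_0^i=a\mid\bm X_t^{-i})\bigr]=\sup_{\delta}\mathbb P\bigl(\delta(\bm X_t^{-i})=X_0^i\bigr),
\]
where $\delta$ ranges over (possibly randomized) decision rules. The supremum is attained by the argmax rule, as in Equation~\eqref{eq:argmax-hamming-main}. Given any rule $\delta$ for $\bm X_t^{-i}$, define a randomized rule for $\bm X_s^{-i}$ by first simulating $\bm X_t^{-i}$ from $\bm X_s^{-i}$ via the uniform channels and then applying $\delta$. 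By the Markov chain in Step 1, this rule has the same success probability. So the optimal success at level $s$ is at least that at level $t$.

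Equivalently, the tower property gives $\mathbb P(X_0^i=a\mid\bm X_t^{-i})=\mathbb E[\mathbb P(X_0^i=a\mid\bm X_s^{-i})\mid\bm X_t^{-i}]$. Since $\max_a$ is convex, Jensen's inequality yields $\mathbb E\max_a q^i_{0\mid t}\le\mathbb E\max_a q^i_{0\mid s}$.

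\textbf{Conclusion and main obstacle.} Step 2 gives $\mathcal R_s^i\le\mathcal R_t^i$, which is the claim. The only delicate step is justifying the conditional-independence identity $\mathbb P(X_0^i=a\mid\bm X_s^{-i},\bm X_t^{-i})=\mathbb P(X_0^i=a\mid\bm X_s^{-i})$ used in the tower argument. The vocabulary is finite, so all conditionals are elementary ratios on positive-probability events and no measure-theoretic issues arise. The edge cases $\alpha_t=1$ (the observation is pure noise and carries no information) and $\alpha_s=0$ need no separate treatment.
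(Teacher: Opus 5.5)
Your proposal is correct and follows essentially the same route as the paper. The paper also derives the Markov chain $X_0^i\to\bm X_s^{-i}\to\bm X_t^{-i}$ from the coordinatewise factorization, simulates any $\bm X_t^{-i}$-based rule from $\bm X_s^{-i}$ by drawing a synthetic later state through $\prod_{j\ne i}q_{t\mid s}^j$, and concludes via the argmax Bayes rule; your explicit coupling and your tower-property/Jensen variant are equivalent reformulations of that same garbling argument.
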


Appendix~\ref{app:warmup-proposition-proof} gives the proof.
Additional corruption maps $\bm X_s^{-i}$ to $\bm X_t^{-i}$, so the cleaner
context can simulate any predictor based on the noisier one. The proposition
concerns the exact posterior; in practice, we rank positions using the learned
model's confidence deficit. Changing the warmup also changes how many later
steps can use a selected token, so the comparison in Figure~\ref{fig:warmup-permanent} measures both effects together.

\section{Experiments}
\label{sec:experiments}

We use the public Duo-distilled and base Duo checkpoints~\citep{sahoo2025duo}
to study two questions. First, how do top-$p$ and \CRS{} trade generative
quality for entropy across compute budgets? Second, what do paired controls
reveal about the effect of persistent context on later predictions? All
end-to-end comparisons use argmax decoding in the final step, and each policy
generates its own trajectory. Following prior discrete-diffusion
studies~\citep{hayakawa2025distillation,hayakawa2026demystifying}, our main
quality metric is
$\mathrm{GenPPL}=\exp(\mathrm{GenNLL})$, where GenNLL is the mean next-token
negative log-likelihood assigned to generated samples by
GPT-2-Large~\citep{radford2019language}. In the main text, we use mean
within-sequence unigram entropy as the diversity axis.
Appendix~\ref{app:experiment-details} reports complementary diversity and
repetition metrics, full configurations, experiments that limit how long
selected labels remain visible, and all uncertainties.

\subsection{How does the tradeoff change with NFE?}
\label{sec:nfe-scaling}

Panels (a)--(b) of Figure~\ref{fig:nfe-scaling} plot GenPPL against NFE while
holding four procedures fixed. On Duo-distilled, \CRS{} with top-$p$ disabled ($p=1.0$) has lower GenPPL throughout the tested NFE range than native sampling with $p\in\{1.0,0.95,0.9\}$. Base Duo shows the same ordering of point estimates in a descriptive, protocol-aligned comparison.
Appendix~\ref{app:nfe-scaling} gives estimates and uncertainties.

\begin{figure}[H]
  \centering
  \includegraphics[width=\linewidth]{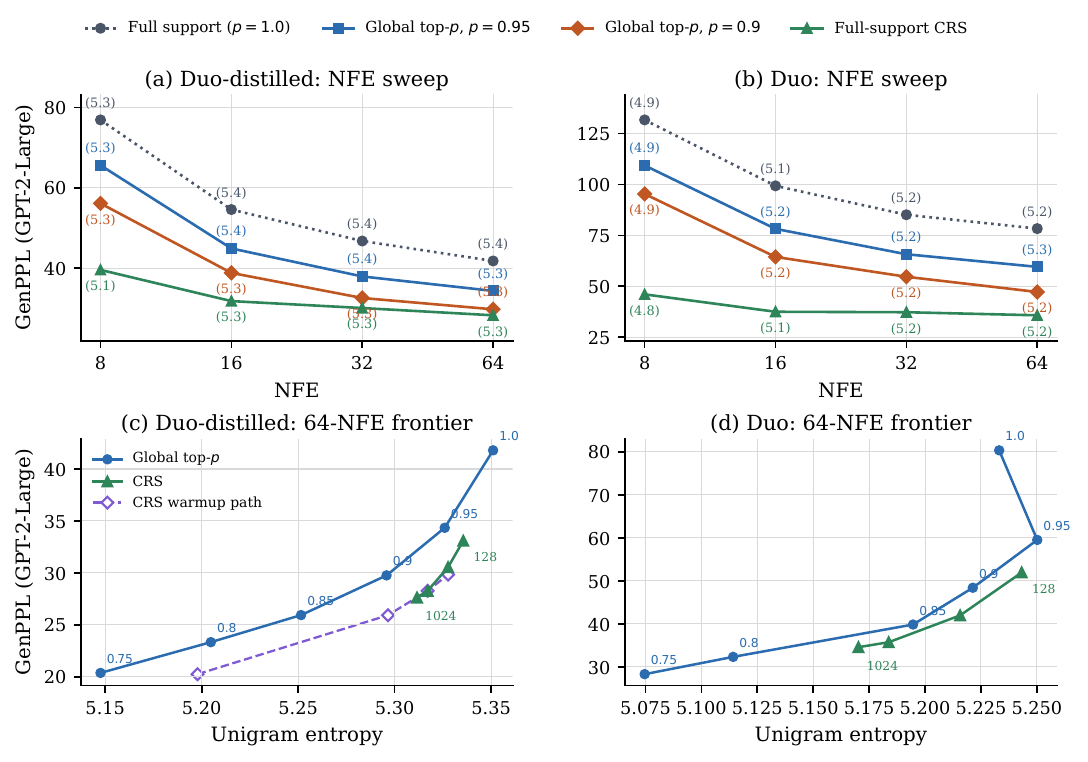}
  \caption{
    GenPPL across NFE (a--b) and 64-NFE GenPPL--entropy comparisons
    (c--d), all with argmax decoding in the final step. Parentheses in
    (a--b) give unigram entropy. Panels (c--d) sweep global top-$p$ and the
    \CRS{} selection budget; open diamonds in (c) show the
    $D_{\max}=512$ warmup path used in Figure~\ref{fig:warmup-permanent}.
    Appendix~\ref{app:nfe-scaling} gives the complete grids and uncertainties.
  }
  \label{fig:nfe-scaling}
\end{figure}

Note that our base-Duo top-$p$ values differ from those reported by
\mbox{\citet{deschenaux2026psi}} because their final step samples ancestrally,
whereas ours uses argmax decoding for every method. In a paired 32-NFE audit
that fixes all earlier reverse steps, changing only the final step back to
ancestral sampling recovers their reported range for $p\in\{1.0,0.9\}$.
Figure~\ref{fig:nfe-scaling} therefore compares the methods under our common
argmax final step. Appendix~\ref{app:nfe-scaling} gives the paired values.

\subsection{What changes when selected tokens remain visible?}
\label{sec:mechanism-evidence}

Three controls show that the path change depends on the inserted token values
and cannot be reduced to position selection or scalar temperature. Replacing
stored argmax labels by nearby alternatives reverses the likelihood gain, so
selected positions alone are insufficient. At the same mutable state, the
best scalar-temperature fit leaves
$79.4\pm2.1\%$ of the pooled KL shift unexplained, and the top-ranked token
changes on $30.2\pm0.8\%$ of audited coordinates. Restoring one
selected coordinate to its revisable input leaves the surrounding state
predictive of that label, indicating that its information has propagated to
other coordinates. Appendix~\ref{app:mechanism-tables} gives
these estimates. Appendices~\ref{app:storage-output-control} and
\ref{app:support-duration} test direct output copying and vary how many later
steps can see each selected label.

\paragraph{How do the checkpoints differ?}
At each shared uniform-forward state, we apply the same \CRS{} position-selection rule and budget separately to the two checkpoints. The resulting selected-coordinate sets overlap only moderately. Conditional on a fixed set of selected coordinates, however, the checkpoints almost always predict the same argmax labels and attain nearly identical clean-label accuracy. Duo-distilled mainly differs by selecting positions with slightly lower oracle error. This shared-state audit therefore localizes a checkpoint difference in \CRS{} position selection. Appendix~\ref{app:checkpoint-selector} gives the factorial and calibration details.

\paragraph{What changes when selection starts later?}
On exact uniform-forward states, selected-label error falls steadily as noise
decreases, consistent with Proposition~\ref{prop:warmup-blackwell}. The
sampler-level sweep varies how many steps precede selection. Later selection
recovers unigram entropy while retaining a substantial likelihood gain, and
all four warmup endpoints lie below the matched global top-$p$ curve in
Figure~\ref{fig:nfe-scaling}(c). Appendix
\ref{app:warmup-duration} gives uncertainties, controls that vary how many
later steps see each selected label, and sequence-level metrics.

\begin{figure}[H]
  \centering
  \includegraphics[width=0.68\linewidth]{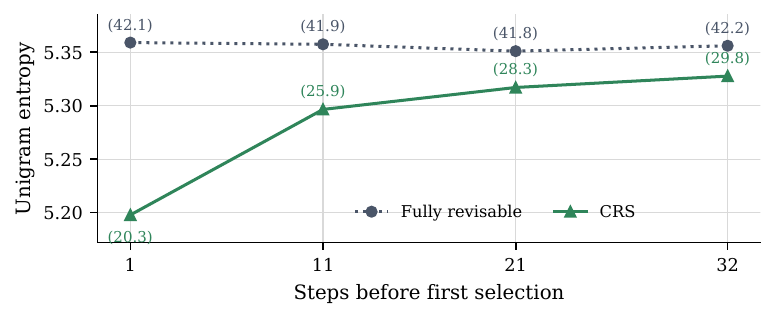}
  \caption{The \CRS{} tradeoff as selection starts later at 64 NFE.
  Ticks give steps before first selection; parentheses give GenPPL.
  Appendix~\ref{app:warmup-duration} gives uncertainties and controls that
  vary how many later steps see each selected label.}
  \label{fig:warmup-permanent}
\end{figure}

\subsection{Are support restriction and persistent context interchangeable?}
\label{sec:complete-policy-evidence}

Panels (c--d) of Figure~\ref{fig:nfe-scaling} compare CRS budget
sweeps with global top-$p$ at 64 NFE. Across the overlapping entropy region,
the CRS curve lies below the global curve on Duo-distilled; base Duo shows the
same direction in a descriptive comparison. After matching entropy within each
shard, \CRS{} at $D_{\max}=512$ lowers GenNLL by $0.154\pm0.031$ nats per
token.
Appendices~\ref{app:nfe-scaling} and~\ref{app:mechanism-tables} report the
frontier values, uncertainties, and interpolation rule. Global top-$p$ has
lower repeated 4-gram fraction (Rep-4)~\citep{welleck2020unlikelihood}; in
fixed-prefix continuation, GPT-2 favors applying $p=0.8$ at every step, whereas
Pythia~\citep{biderman2023pythia} does not distinguish the two within
uncertainty.

The curves overlap in places, but the samplers reach them differently:
top-$p$ truncates the distribution used at each current step, whereas \CRS{}
carries selected tokens into later model inputs. Delaying selection then moves
\CRS{} along its GenPPL--entropy curve. Rep-4 and fixed-prefix likelihood rank
some of these operating points differently.

\section{Concluding Remarks}
\label{sec:conclusion}

\CRS{} turns selected clean-token hypotheses into context for later steps,
whereas top-$p$ restricts only the distribution used at the current step. Our
analysis gives the two stages different roles: waiting weakly improves oracle
token selection, while keeping a selected token visible can steer later
product predictions toward the same mode.

Across 8--64 NFE under the same argmax final step, \CRS{} with top-$p$
disabled ($p=1.0$) achieves lower GenPPL than the fixed
$p\in\{0.95,0.9\}$ baselines on both checkpoints. At 64 NFE, varying its
selection budget gives lower GenPPL at matched entropy on Duo-distilled; a
separate base-Duo comparison shows the same trend. This advantage is specific
to the GenPPL--entropy comparison: Rep-4 and fixed-prefix likelihood can rank
the operating points differently. Future work should test larger checkpoints
and other uniform-state architectures.

%% file: appendix.tex
\FloatBarrier

\section{Committed Reveal and Finite Persistence}
\label{app:crs-algorithm}

Let $N$ be the total NFE and let
$t_0>\cdots>t_{N-1}=\varepsilon>0$ be the noisy-state grid. The first $N-1$
sampling steps produce reverse transitions, and the final step is the common
argmax decoder from Section~\ref{sec:persistent-context}. Let $W$ be the
number of warmup steps among the first $N-1$. For a requested warmup fraction
$f$, the implementation sets
$W=\operatorname{round}\{f(N-1)\}$ and uses $R=N-1-W$ post-warmup steps.
At reverse step $r$,
$\bm x_{t_r}$ denotes the mutable noisy state and
$C_r$ the coordinates whose stored labels are currently visible. Our implementation
uses an \emph{input-overwrite-only} interface: it overwrites only the denoiser input,
\[
 \widetilde{\bm x}_r=\bm x_{t_r}[C_r\leftarrow\bm c|_{C_r}].
\]
It does not pin the noisy-state token used by the reverse transition. Thus, after any scheduled
top-$p_r$ restriction, the actual one-coordinate transition is
\begin{equation}
 \pi_{t_{r+1}\mid t_r}^{i,\mathrm{ow}}
 (y\mid\widetilde{\bm x}_r;x_{t_r}^i)
 :=
 \sum_x q_{t_{r+1}\mid0,t_r}^i(y\mid x,x_{t_r}^i)
 \Bigl(
 \mathcal C_{t_r,x_{t_r}^i}
 \mathsf T_{p_r}h_{\theta,t_r}^i
 (\cdot\mid\widetilde{\bm x}_r)
 \Bigr)(x).
 \label{eq:overwrite-transition-app}
\end{equation}
The superscript ``ow'' denotes this input-overwrite interface. The semicolon
separates the overwritten denoiser input from the mutable current-token
argument: $h_{\theta,t_r}^i$ sees $\widetilde{\bm x}_r$, whereas the conversion
and bridge in Equations~\eqref{eq:conversion-main}
and~\eqref{eq:learned-reverse-main} continue to use $x_{t_r}^i$. Here
$\mathsf T_{p_r}h:=\mathsf T_{A_{p_r}(h)}h$ and $\mathsf T_1$ is the
identity. For each eligible coordinate,
\begin{equation}
 a_r^i\in\arg\max_{a\in\mathcal S}h_{\theta,t_r}^i(a\mid\widetilde{\bm x}_r),
 \qquad
 d_r^i=1-h_{\theta,t_r}^i(a_r^i\mid\widetilde{\bm x}_r).
 \label{eq:selector-deficit-app}
\end{equation}
The canonical selector scans currently unselected coordinates in increasing
$d_r^i$ and greedily forms a within-step batch $S_r$. With fixed spacing
radius $R_{\mathrm{sp}}$, every accepted batch satisfies
\begin{equation}
 |S_r|\le K_r,
 \qquad
 \sum_{i\in S_r}d_r^i\le E_r,
 \qquad
 |i-j|>R_{\mathrm{sp}}
 \quad\text{for all distinct }i,j\in S_r.
 \label{eq:selector-constraints-app}
\end{equation}
The spacing test is applied among coordinates newly selected at step $r$;
coordinates stored at earlier steps are excluded from reselection but do not
exclude their neighboring positions at later steps. This local-exclusion rule
is a simple one-dimensional analogue of spatially dispersed unmasking in the
Halton scheduler~\citep{besnier2025halton}. It also acts as a hard proxy for
the spatial-dispersion term in the choose-then-sample decomposition of
Proposition~4 in \citet{hayakawa2026demystifying}: confidence ordering favors
low-uncertainty coordinates, while spacing prevents one batch from being spent
entirely on nearby, potentially redundant positions. This connection motivates
the fixed rule rather than identifying an optimal radius for language.
The primary implementation distributes the path-total deficit budget and the
sequence-length count budget uniformly over the $R$ post-warmup steps:
\[
 E_r=D_{\max}/R,
 \qquad
 K_r=\lceil L/R\rceil,
 \qquad r=W,\ldots,N-2.
\]
The deficit budgets are upper bounds rather than quotas. In the primary
64-NFE duration factorial, $K_r=25$ after warmup and
$R_{\mathrm{sp}}=4$.
Smaller deficit means that the denoiser output places more mass on its preferred
label. The selector is a concentration controller rather than a calibrated
estimate of target-token error.

\begin{algorithm}[H]
\caption{Committed reveal with horizon $H$}
\label{alg:crs-horizon}
\small
\begin{algorithmic}[1]
\Require checkpoint $\mathsf M_\theta$; total NFE $N$ and grid
$t_0>\cdots>t_{N-1}=\varepsilon>0$;
warmup $W$; horizon $H\in\{0,1,\ldots,\infty\}$; selector budgets $E_r$ and
count caps $K_r$;
top-$p$ schedule $(p_r)_r$
\State sample $\bm x_{t_0}$ from the terminal uniform law
\State $C\gets\varnothing$; stored labels $\bm c\gets\varnothing$;
expiry map $e\gets\varnothing$; previously selected set $B\gets\varnothing$
\For{$r=0,\ldots,N-2$}
  \If{$0<H<\infty$}
    \State remove $i$ from $C$ whenever $r=e_i$
  \EndIf
  \State $\widetilde{\bm x}_r\gets\bm x_{t_r}[C\leftarrow\bm c|_C]$
  \State $h_r\gets\mathsf M_\theta(\widetilde{\bm x}_r,t_r)$
  \If{$r\ge W$ and $H>0$}
    \State choose $S_r$ from $[L]\setminus B$ using
    Equations~\eqref{eq:selector-deficit-app}--\eqref{eq:selector-constraints-app}
    \State $c_i\gets a_r^i$ for $i\in S_r$
    \If{$0<H<\infty$}
      \State $e_i\gets r+H+1$ for $i\in S_r$
    \EndIf
    \State $C\gets C\cup S_r$; $B\gets B\cup S_r$
  \EndIf
  \State independently sample each updated $x_{t_{r+1}}^i$ from
  Equation~\eqref{eq:overwrite-transition-app}
  \Comment{the transition uses $x_{t_r}^i$, not $\widetilde x_r^i$}
\EndFor
\If{$0<H<\infty$}
  \State remove $i$ from $C$ whenever $N-1=e_i$
\EndIf
\State use the final step at $t_{N-1}$ for the common argmax readout,
with input $\bm x_{t_{N-1}}[C\leftarrow\bm c|_C]$
\end{algorithmic}
\end{algorithm}

Newly stored labels do not alter the already computed transition at step $r$;
they first enter the denoiser at step $r+1$. With $e_i=r+H+1$, a label selected
at $r$ is visible for exactly the next $H$ steps (or all remaining steps if
fewer remain). For $H=0$, selection and storage are disabled and the algorithm
is the no-storage revisable path under the supplied top-$p$ schedule. For
$H=\infty$, no expiry occurs and the
algorithm is \CRS. In the finite-horizon factorial, an expired
coordinate becomes revisable but remains in $B$, so it cannot be selected
again. Thus $H=8$ is a finite-persistence control lasting eight later steps, not a
renewable policy.

\FloatBarrier

\section{Warmup as Lower-Risk Value Selection}
\label{app:warmup-theory}

\subsection{Proof of Proposition~\ref{prop:warmup-blackwell}}
\label{app:warmup-proposition-proof}

For $s<t$, coordinatewise factorization of the forward kernel gives the
Markov chain
\[
 X_0^i\longrightarrow\bm X_s^{-i}\longrightarrow\bm X_t^{-i}.
\]
Any decision rule based on $\bm X_t^{-i}$ can therefore be simulated from
$\bm X_s^{-i}$ by first drawing a synthetic later state from
$q_{t\mid s}^{-i}:=\prod_{j\ne i}q_{t\mid s}^j$, the product transition on
the non-$i$ coordinates. The joint law of that synthetic state and $X_0^i$ is the
same as the joint law of $\bm X_t^{-i}$ and $X_0^i$. Hence the best rule based
on $\bm X_s^{-i}$ cannot have larger zero--one risk. The Bayes rule is the
largest-probability LOO label, so
$\mathcal R_s^i\le\mathcal R_t^i$.

\subsection{Illustration: warmup separates sequence modes}
\label{app:warmup-codebook}

The proposition orders exact-law selection risks without assuming a particular
data distribution. To make its sequence-level consequence concrete, consider
$K$ candidate clean sequences $c_1,\ldots,c_K\in\mathcal S^L$ with minimum
Hamming separation
\[
 \Delta:=\min_{z\ne z'}d_H(c_z,c_{z'})>0.
\]
Let the mode $M$ be uniform on $[K]$, set $\bm X_0=c_M$, and condition on
$M=m_\star$, writing $c_\star:=c_{m_\star}$. For a forward-corrupted state
$\bm x_s$, put $d=d_H(\bm x_s,c_\star)$. Under the uniform channel,
\[
 q_{s\mid0}(\bm x_s\mid c_z)
 =v_s^{L-d_H(\bm x_s,c_z)}u_s^{d_H(\bm x_s,c_z)},
 \qquad
 v_s=1-\alpha_s+\frac{\alpha_s}{V},
 \quad u_s=\frac{\alpha_s}{V}.
\]
Because $v_s>u_s$ away from complete noise, equal-prior
maximum-a-posteriori (MAP) decoding is nearest-Hamming decoding. For every
competing $c_z$, the triangle inequality gives
\[
 d_H(\bm x_s,c_z)
 \ge d_H(c_z,c_\star)-d_H(\bm x_s,c_\star)
 \ge\Delta-d.
\]
Thus $d<\Delta/2$ makes $c_\star$ the unique nearest codeword and hence the
unique MAP mode.

This mode decision also connects to the coordinatewise argmax used by
\CRS{}. Let $w_z=\mathbb P(M=z\mid\bm X_s=\bm x_s)$, suppose
$w_\star:=w_{m_\star}>1/2$, and define
$R_w(a)=\sum_z w_zd_H(a,c_z)$. The reverse triangle inequality yields
\begin{equation}
 R_w(a)-R_w(c_\star)
 \ge(2w_\star-1)d_H(a,c_\star).
 \label{eq:hamming-bayes-gap-app}
\end{equation}
The right-hand side is positive for every $a\ne c_\star$, proving uniqueness
of the posterior-Hamming Bayes action. Because Hamming risk separates by
coordinate, this action is also obtained by taking the clean-token posterior
argmax at each coordinate.

The same model gives an explicit probability of entering this uniquely
decodable region. Let
\[
 r_s=\frac{(V-1)\alpha_s}{V}
\]
be the probability that one observed symbol differs from its clean symbol.
Then $d_H(\bm X_s,c_\star)\sim\operatorname{Binomial}(L,r_s)$. If
$r_s<\Delta/(2L)$, the standard binomial Chernoff bound gives
\begin{equation}
 \mathbb P\!\left(
 c_\star\text{ is not the unique nearest-Hamming mode}
 \mid c_\star
 \right)
 \le
 \exp\!\left\{
 -L\,d_{\mathrm{bin}}\!\left(
 \frac{\Delta}{2L}\,\middle\Vert\,r_s
 \right)
 \right\},
 \label{eq:warmup-codebook-bound-app}
\end{equation}
where
$d_{\mathrm{bin}}(a\,\Vert\,b)
=a\log(a/b)+(1-a)\log((1-a)/(1-b))$.
Thus lower noise both improves the general oracle-risk ordering in
Proposition~\ref{prop:warmup-blackwell} and, in this separated-mode example,
makes one coherent sequence mode more likely to be uniquely identifiable.
The codebook calculation supplies a concrete sufficient condition, while the
proposition itself provides the distribution-free exact-law comparison.

\FloatBarrier

\section{Shared-Mode Proof}
\label{app:shared-mode-proof}

\subsection{Proof of Theorem~\ref{thm:shared-mode}}
\label{app:shared-mode-theorem-proof}

We first make the conditional extension precise. Let $M\in[K]$ have
$\mathbb P(M=m)=w_m$ and suppose
\[
 \mathbb P(Y=y\mid M=m)
 =\prod_{i=1}^nP_{m,i}(y_i).
\]
Let $U$ be a potential visible-context observation and assume
$Y\perp\!\!\!\perp U\mid M$. For a realization $u$, define the context
likelihood $L_m(u):=\mathbb P(U=u\mid M=m)$. Whenever the denominator is
positive, Bayes' rule gives
\begin{equation}
 w_m[u]:=\mathbb P(M=m\mid U=u)
 =\frac{w_mL_m(u)}{\sum_{\ell=1}^K w_\ell L_\ell(u)}.
 \label{eq:context-posterior-weights-app}
\end{equation}
The conditional-independence assumption is essential: without it,
conditioning on arbitrary context can create dependence among the $Y_i$ even
when the mode is fixed.

The one-coordinate marginals of $Q_{\mathrm{mix}}^w$ are
$Q_{\mathrm{mix},i}^w(y_i)=\sum_m w_mP_{m,i}(y_i)$, hence
$Q_{\mathrm{prod}}^w=\prod_iQ_{\mathrm{mix},i}^w$. Therefore
\begin{equation}
 \KL(Q_{\mathrm{mix}}^w\kldiv Q_{\mathrm{prod}}^w)
 =\sum_i\Ent_{Q_{\mathrm{mix}}^w}(Y_i)
  -\Ent_{Q_{\mathrm{mix}}^w}(Y)
 =:\TC_{Q_{\mathrm{mix}}^w}(Y_1,\ldots,Y_n).
\end{equation}
Using the chain rule and conditional independence given $M$,
\begin{align}
 \TC_{Q_{\mathrm{mix}}^w}(Y_1,\ldots,Y_n)
 &=\sum_{i=2}^n I(Y_i;Y_{<i})
 \le\sum_{i=2}^n I(Y_i;M)
 \le(n-1)\Ent(M)=(n-1)\Ent(w).
\end{align}
Equation~\eqref{eq:context-posterior-weights-app} and
$Y\perp\!\!\!\perp U\mid M$ show that, after observing $U=u$, the law
of $Y$ has the same mixture-of-products form with $w$ replaced by
$w[u]$. Thus $Q_{\mathrm{mix}}^{w[u]}$ and
$Q_{\mathrm{prod}}^{w[u]}$ are precisely
the two laws in Equation~\eqref{eq:shared-mode-laws} evaluated at these
posterior weights. Repeating the argument gives the conditional total-correlation
bound. If $\epsilon(u)=1-\max_m w_m[u]$, separating the largest mode from
the remaining $K-1$ modes yields
\[
 \Ent(w[u])
 \le h_2(\epsilon(u))+\epsilon(u)\log(K-1),
\]
which proves Equation~\eqref{eq:context-mode-gap}.

For completeness, a simple likelihood-ratio condition makes the concentration
explicit. Fix a reference mode $m^\star$ with $w_{m^\star}>0$ and
$L_{m^\star}(u)>0$, and put
$\rho(u):=\max_{m\ne m^\star}L_m(u)/L_{m^\star}(u)$. Summing posterior odds in
Equation~\eqref{eq:context-posterior-weights-app} gives
\begin{equation}
 \frac{1-w_{m^\star}[u]}{w_{m^\star}[u]}
 \le\frac{1-w_{m^\star}}{w_{m^\star}}\rho(u).
 \label{eq:context-posterior-odds-app}
\end{equation}
Independent mode-consistent context observations multiply their likelihood
ratios, so a uniform ratio below one makes this bound decay exponentially in
the number of observations.
This result quantifies coordination around the mode favored by the observed
context, not whether that mode is the correct one. It also treats $U$ as an
exogenous observation satisfying the stated conditional-independence model;
the closed loop created when a sampler inserts its own argmax is assessed by
the empirical controls in Appendix~\ref{app:mechanism-tables}.

\subsection{Support evidence and persistent evidence}
\label{app:combined-evidence}

The theorem treats visible labels as evidence about a shared mode. We now place
one-step support restriction and later persistent context in the same posterior
calculation. Condition throughout on the current reverse history, so the
retained sets below are fixed. We use $U_{1:H}$ as an additive-evidence
abstraction for the step-specific states encountered while selected context
remains visible; it is not a model of independently observing the same stored
token $H$ times. Let $M\in[K]$ have prior weights $w_m$. Conditional on $M=m$,
suppose a support-bearing vector $X=(X_1,\ldots,X_d)$ has product law
$\prod_iR_{m,i}$, a future vector $Y=(Y_1,\ldots,Y_n)$ has product law
$\prod_jP_{m,j}$, and visible-context observations $U_1,\ldots,U_H$ are
mutually independent. Assume that $X$, $Y$, and $(U_e)_{e=1}^H$ are mutually
independent given $M$.

Fix a reference mode $m^\star$ and a retained rectangle $A=\prod_iA_i$. For
$m\ne m^\star$ and a realization $u_{1:H}$, define
\begin{align}
 S_m(A)
 &:=\sum_i\log\frac{R_{m^\star,i}(A_i)}{R_{m,i}(A_i)},
 &
 T_m(u_{1:H})
 &:=\sum_{e=1}^H
 \log\frac{\mathbb P(U_e=u_e\mid M=m^\star)}
 {\mathbb P(U_e=u_e\mid M=m)}.
 \label{eq:combined-evidence-parts-app}
\end{align}
Assume that these likelihood ratios are finite and positive, and put
\begin{equation}
 S(A):=\min_{m\ne m^\star}S_m(A),
 \qquad
 T(u_{1:H}):=\min_{m\ne m^\star}T_m(u_{1:H}).
 \label{eq:combined-evidence-minima-app}
\end{equation}

\begin{theorem}[Combined mode evidence]
\label{thm:combined-evidence}
Let $w[A,u]$ be the posterior mode law after observing $X\in A$ and
$U_{1:H}=u_{1:H}$. Then
\begin{equation}
 \frac{1-w_{m^\star}[A,u]}{w_{m^\star}[A,u]}
 \le
 \frac{1-w_{m^\star}}{w_{m^\star}}
 e^{-(S(A)+T(u_{1:H}))}.
 \label{eq:combined-odds-app}
\end{equation}
With
\begin{equation}
 a:=\frac{1-w_{m^\star}}{w_{m^\star}},
 \qquad
 \varepsilon(S,T):=\frac{ae^{-(S+T)}}{1+ae^{-(S+T)}},
 \qquad
 \bar\varepsilon:=\min\{\varepsilon(S(A),T(u_{1:H})),1-1/K\},
 \label{eq:combined-error-app}
\end{equation}
we have
$1-w_{m^\star}[A,u]\le\varepsilon(S(A),T(u_{1:H}))$ and
\begin{equation}
 \Ent(w[A,u])
 \le h_2(\bar\varepsilon)+\bar\varepsilon\log(K-1).
 \label{eq:combined-entropy-app}
\end{equation}
If $Q_{\mathrm{mix}}^{w[A,u]}$ is the mixture of the product laws of $Y$ under
$w[A,u]$ and $Q_{\mathrm{prod}}^{w[A,u]}$ is the product of its coordinate
marginals, then
\begin{equation}
 \KL(Q_{\mathrm{mix}}^{w[A,u]}\kldiv Q_{\mathrm{prod}}^{w[A,u]})
 \le(n-1)
 \left[h_2(\bar\varepsilon)+\bar\varepsilon\log(K-1)\right].
 \label{eq:combined-gap-app}
\end{equation}
\end{theorem}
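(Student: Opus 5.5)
The argument has three steps: a posterior-odds bound, a monotonicity argument that turns the error bound into an entropy bound, and a reduction to Theorem~\ref{thm:shared-mode}.

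\textbf{Step 1: odds bound.} Because $X$, $Y$ and $(U_e)_{e=1}^H$ are mutually independent given $M$, and $X$ has product law given $M$, the likelihood of the joint event $\{X\in A,\ U_{1:H}=u_{1:H}\}$ under mode $m$ factorizes as
\[
\Lambda_m:=\prod_i R_{m,i}(A_i)\prod_{e=1}^H \mathbb P(U_e=u_e\mid M=m).
\]
Bayes' rule, exactly as in Equation~\eqref{eq:context-posterior-weights-app}, gives $w_m[A,u]\propto w_m\Lambda_m$. Hence
\[
\frac{1-w_{m^\star}[A,u]}{w_{m^\star}[A,u]}=\sum_{m\ne m^\star}\frac{w_m}{w_{m^\star}}\frac{\Lambda_m}{\Lambda_{m^\star}}.
\]
By the definitions in \eqref{eq:combined-evidence-parts-app}, $\Lambda_m/\Lambda_{m^\star}=e^{-(S_m(A)+T_m(u_{1:H}))}$. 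By \eqref{eq:combined-evidence-minima-app} this is at most $e^{-(S(A)+T(u_{1:H}))}$, using that the sum of minima is at most the minimum of the sum. Summing over $m\ne m^\star$ and using $\sum_{m\ne m^\star}w_m=1-w_{m^\star}$ gives \eqref{eq:combined-odds-app}.

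\textbf{Step 2: error and entropy bounds.} The map $\epsilon\mapsto\epsilon/(1-\epsilon)$ is increasing, so \eqref{eq:combined-odds-app} inverts to $1-w_{m^\star}[A,u]\le\varepsilon(S,T)$. For the entropy bound, set $\epsilon_0:=1-\max_m w_m[A,u]$. This satisfies $\epsilon_0\le 1-w_{m^\star}[A,u]$ and also $\epsilon_0\le 1-1/K$, so $\epsilon_0\le\bar\varepsilon$. The grouping argument in the proof of Theorem~\ref{thm:shared-mode} gives
\[
\Ent(w[A,u])\le h_2(\epsilon_0)+\epsilon_0\log(K-1).
\]
It remains to check that $g(\epsilon)=h_2(\epsilon)+\epsilon\log(K-1)$ is nondecreasing on $[0,1-1/K]$. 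Its derivative is $g'(\epsilon)=\log\frac{(1-\epsilon)(K-1)}{\epsilon}$, which is nonnegative exactly when $\epsilon\le 1-1/K$. Therefore $g(\epsilon_0)\le g(\bar\varepsilon)$, which yields \eqref{eq:combined-entropy-app}.

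\textbf{Step 3: KL gap.} Condition on $X\in A$ and $U_{1:H}=u_{1:H}$. By the assumed conditional independence, $Y$ given $M=m$ still has law $\prod_jP_{m,j}$, so the conditional law of $Y$ is the mixture of products with weights $w[A,u]$. The first part of Theorem~\ref{thm:shared-mode}, applied with $w$ replaced by $w[A,u]$, bounds the KL gap by $(n-1)\Ent(w[A,u])$. Combining with Step~2 gives \eqref{eq:combined-gap-app}.

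The only nonroutine point is the clipping at $1-1/K$ in Step~2: without the monotonicity of $g$ on $[0,1-1/K]$, the bound in terms of $\varepsilon$ would not be valid for large $\varepsilon$. The other steps are direct Bayes computations.
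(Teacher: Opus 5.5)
Your proof is correct and follows the paper's argument: the same Bayes posterior-odds computation summed over $m\ne m^\star$, inversion of the odds bound into the error bound, an entropy bound clipped at $1-1/K$, and an application of Theorem~\ref{thm:shared-mode} with $w$ replaced by $w[A,u]$. The one difference is in the entropy step, and it is only presentational. The paper uses a max-entropy case split: mass $1-\varepsilon$ on $m^\star$ with the remainder spread uniformly, or saturation at $\log K$. You instead bound by the largest posterior mode and use monotonicity of $h_2(\epsilon)+\epsilon\log(K-1)$ on $[0,1-1/K]$, which makes explicit the monotonicity the paper's case split relies on.
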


\begin{proof}
For every $m\ne m^\star$, Bayes' rule and conditional independence give
\[
 \frac{w_m[A,u]}{w_{m^\star}[A,u]}
 =\frac{w_m}{w_{m^\star}}
 \exp\{-S_m(A)-T_m(u_{1:H})\}
 \le\frac{w_m}{w_{m^\star}}
 e^{-(S(A)+T(u_{1:H}))}.
\]
Summing proves Equation~\eqref{eq:combined-odds-app}, and solving the odds
inequality gives the posterior-error bound. When that upper bound is below
$1-1/K$, entropy is maximized by placing the remaining mass uniformly over
the other $K-1$ modes. For a larger upper bound, the uniform mode law is
feasible and the entropy certificate saturates at $\log K$. This proves
Equation~\eqref{eq:combined-entropy-app}. Applying
Theorem~\ref{thm:shared-mode} to the future product components gives
Equation~\eqref{eq:combined-gap-app}.
\end{proof}

Equation~\eqref{eq:combined-odds-app} also gives the mode-log-loss
certificate
\begin{equation}
 -\log w_{m^\star}[A,u]
 \le
 \log\!\left(1+a e^{-\{S(A)+T(u_{1:H})\}}\right)
 =:\mathcal L_{\mathrm{mode}}\!\left(S(A),T(u_{1:H})\right).
 \label{eq:mode-logloss-certificate-app}
\end{equation}

\begin{corollary}[Diminishing returns under additive mode evidence]
\label{cor:mode-diminishing-returns}
Assume $0<w_{m^\star}<1$. Write
$a=(1-w_{m^\star})/w_{m^\star}$ and use
$\mathcal L_{\mathrm{mode}}$ from
Equation~\eqref{eq:mode-logloss-certificate-app}.
For $\delta>0$, let the gain in this mode-log-loss certificate be
$\mathcal G_\delta(S,T):=\mathcal L_{\mathrm{mode}}(S,T)
-\mathcal L_{\mathrm{mode}}(S,T+\delta)$. Then
\begin{equation}
 \mathcal G_\delta(S,T)>0,
 \qquad
 \partial_S\mathcal G_\delta(S,T)
 =\partial_T\mathcal G_\delta(S,T)<0,
 \label{eq:mode-diminishing-returns}
\end{equation}
and $\mathcal G_\delta(S,T)\to0$ as $S+T\to\infty$.
\end{corollary}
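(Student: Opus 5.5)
Set $x:=S+T$ and write the certificate as a one-variable function $\mathcal L_{\mathrm{mode}}(S,T)=f(S+T)$, where $f(x):=\log(1+ae^{-x})$ and $a>0$ because $0<w_{m^\star}<1$. Then
\[
 \mathcal G_\delta(S,T)=g(x):=f(x)-f(x+\delta).
\]
Every claim in Corollary~\ref{cor:mode-diminishing-returns} reduces to a property of $f$.

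\textbf{Step 1 (equality of partials).} Since $\mathcal G_\delta$ depends on $(S,T)$ only through $S+T$, the chain rule gives $\partial_S\mathcal G_\delta=\partial_T\mathcal G_\delta=g'(S+T)$.

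\textbf{Step 2 (positivity).} Compute
\[
 f'(x)=-\frac{ae^{-x}}{1+ae^{-x}}=-\sigma(x-\log a),
\]
where $\sigma$ is the logistic function. Because $a>0$, this is strictly negative, so $f$ is strictly decreasing. Hence $g(x)=f(x)-f(x+\delta)>0$ for $\delta>0$.

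\textbf{Step 3 (strictly negative derivative).} This is the only step that needs more than a derivative sign, though it is still routine. We have $g'(x)=f'(x)-f'(x+\delta)$, so it suffices to show $f'$ is strictly increasing, i.e., $f$ is strictly convex. Indeed
\[
 f''(x)=\frac{ae^{-x}}{(1+ae^{-x})^2}=\sigma'(x-\log a)>0.
\]
Since $f'$ is strictly increasing, $f'(x)<f'(x+\delta)$, so $g'(x)<0$. This gives the inequality in Equation~\eqref{eq:mode-diminishing-returns}.

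\textbf{Step 4 (vanishing gain).} As $x\to\infty$, $ae^{-x}\to0$, so $f(x)\to0$ and $f(x+\delta)\to0$. Therefore $g(x)\to0$. Alternatively, $0<g(x)\le f(x)\le ae^{-x}$ by $\log(1+y)\le y$, which even gives an exponential rate.

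The main point to be careful about is keeping the reduction to a single variable explicit: the common value of the two partials is the derivative of a difference of shifted strictly convex functions. Once strict convexity of $x\mapsto\log(1+ae^{-x})$ is noted, every claim follows directly.
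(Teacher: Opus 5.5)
Your proof is correct and follows essentially the same route as the paper's: reduce to $u=S+T$, get positivity because $f$ is decreasing, and get the negative common partial from strict monotonicity of $f'$. The paper phrases that last step as strict monotonicity of $x\mapsto x/(1+x)$ evaluated at $ae^{-u}>ae^{-(u+\delta)}$, which is your strict convexity. One cosmetic slip: $-ae^{-x}/(1+ae^{-x})$ equals $-\sigma(\log a-x)$, not $-\sigma(x-\log a)$, but your explicit formulas for $f'$ and $f''$ are right, so nothing downstream is affected.
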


\begin{proof}
Put $u=S+T$. Direct substitution gives
\[
 \mathcal G_\delta(S,T)
 =\log\frac{1+ae^{-u}}{1+ae^{-(u+\delta)}}>0.
\]
Because the gain depends on $S$ and $T$ only through their sum, its two partial
derivatives are equal. They are
\begin{equation}
 \partial_S\mathcal G_\delta(S,T)
 =\partial_T\mathcal G_\delta(S,T)
 =-\frac{ae^{-u}}{1+ae^{-u}}
 +\frac{ae^{-(u+\delta)}}{1+ae^{-(u+\delta)}}<0.
\end{equation}
The inequality follows because $x/(1+x)$ is strictly increasing. The displayed
ratio defining $\mathcal G_\delta$ tends to one as $u\to\infty$, so the gain
tends to zero.
\end{proof}

Under uniform corruption, if $g_{m,j}$ is the clean-token law at visible
coordinate $j$, token $c_j$ supplies Bayes factor
\begin{equation}
 B_{t,j}^{m^\star,m}(c_j)
 =\frac{(1-\alpha_t)g_{m^\star,j}(c_j)+\alpha_t/V}
 {(1-\alpha_t)g_{m,j}(c_j)+\alpha_t/V}.
 \label{eq:uniform-anchor-bf-app}
\end{equation}
When $g_{m^\star,j}(c_j)>g_{m,j}(c_j)$, this factor increases as
$\alpha_t$ decreases. A mode-consistent visible token therefore becomes more
informative toward the clean end of the reverse path.

\FloatBarrier

\section{Experimental Details}
\label{app:experiment-details}

\subsection{Pathwise singleton trace}
\label{app:singleton-trace}

We trace the native Duo-distilled sampler at 64 NFE using four seed shards and
four paths per shard. To make the trace quantities explicit, at reverse step
$r$ define
\begin{equation}
 h_r^i(a):=h_{\theta,t_r}^i(a\mid\bm x_{t_r}),
 \qquad
 A_{r,i}^{(p)}:=A_p(h_r^i),
 \qquad
 a_r^i:=\arg\max_a h_r^i(a).
 \label{eq:singleton-trace-def-app}
\end{equation}
A singleton event is
$E_{r,i}^{(p)}:=\{A_{r,i}^{(p)}=\{a_r^i\}\}$. On this event,
$\mathsf T_ph_r^i=\delta_{a_r^i}$ and Bayes conversion preserves that point
mass. Hence the transition sampled by the current stochastic bridge reduces to
\begin{equation}
 X_{t_{r+1}}^i\sim
 q_{t_{r+1}\mid0,t_r}^i
 \bigl(\,\cdot\mid a_r^i,x_{t_r}^i\bigr),
 \label{eq:singleton-bridge-app}
\end{equation}
which need not be a point mass even though the clean-token proposal is one.

The table reports averages of the following event indicators. ``Next step''
is
\begin{equation}
 \1\!\left\{
 \arg\max_a h_{r+1}^i(a)=a_r^i
 \right\}
 \quad\text{conditioned on }E_{r,i}^{(p)}
 \text{ and on step }r+1\text{ existing},
 \label{eq:singleton-next-step-app}
\end{equation}
so it asks whether the point hypothesis remains the denoiser's preferred clean
label after one sampled bridge transition; it does not require the next top-$p$
support to remain a singleton. ``Final'' is
\begin{equation}
 \1\{\widehat x^i=a_r^i\},
 \qquad
 \widehat x^i=\arg\max_a
 h_{\theta,\varepsilon}^i(a\mid\bm x_\varepsilon),
 \label{eq:singleton-final-app}
\end{equation}
conditioned on $E_{r,i}^{(p)}$. ``Stochastic bridge'' is the conditional rate
of
\begin{equation}
 \max_y q_{t_{r+1}\mid0,t_r}^i
 (y\mid a_r^i,x_{t_r}^i)<1-10^{-8},
 \label{eq:singleton-stochastic-bridge-app}
\end{equation}
which distinguishes a point clean hypothesis from a deterministic update of
the mutable noisy state. ``Singleton'' averages $\1\{E_{r,i}^{(p)}\}$ over all
active coordinate--step pairs; ``Last 10'' restricts that average to the final
ten reverse steps.

\begin{table}[H]
\centering
\small
\setlength{\tabcolsep}{2.8pt}
\begin{tabular}{@{}lccccc@{}}
\toprule
Native $p$ & Singleton & Last 10 & Next step & Final & Stochastic bridge \\
\midrule
0.8 & $35.44\!\pm\!0.32$ & $58.46\!\pm\!0.90$ & $99.65\!\pm\!0.03$ &
$95.93\!\pm\!0.20$ & $94.80\!\pm\!0.06$ \\
0.9 & $27.54\!\pm\!0.43$ & $45.56\!\pm\!0.24$ & $99.78\!\pm\!0.02$ &
$97.16\!\pm\!0.12$ & $94.76\!\pm\!0.09$ \\
\bottomrule
\end{tabular}
\caption{Pathwise point hypotheses, in percent (mean$\pm$standard error (SE)
over four seed
shards). `Last 10' is the singleton rate over the final ten reverse
transitions.}
\label{tab:singleton-trace-app}
\end{table}

\subsection{NFE scaling on the two Duo checkpoints}
\label{app:nfe-scaling}

The common-decoder NFE study uses four aligned seed shards and 64 samples per
cell per shard. Both public Duo checkpoints generate 1024 GPT-2 tokens with
path-total $D_{\max}=512$ for CRS. In both cases the warmup fraction is $0.33$,
CRS uses full clean-token support and persistent input visibility ($H=\infty$), and every
procedure uses argmax decoding in the final step. Initial states and
random-number-generator (RNG) streams
are reset within each NFE. The NFE sweep sets
$K_r=\lceil1024/(N-1-W)\rceil$, giving per-step caps $205,103,49,25$ at
$N\in\{8,16,32,64\}$ and approximately 1024 positions of path capacity after
warmup. The subtraction by one reserves the final step for the
common argmax readout rather than a reverse transition.

At 64 NFE, the first selection follows 21 warmup steps. With
$D_{\max}=512$, CRS cumulatively selects $991.23\pm1.02$ coordinates per sample
on Duo-distilled and $963.73\pm0.73$ on Duo. Across the 42 post-warmup steps,
the mean number of simultaneously visible stored labels is
$508.23\pm0.19$ and $502.68\pm0.16$, respectively. Values are mean$\pm$SE over
the four seed shards. These path statistics show that the intervention is
gradually accumulated rather than a one-shot overwrite of the sequence.

\begin{table}[H]
\centering
\small
\setlength{\tabcolsep}{4.2pt}
\begin{tabular}{@{}lrrrr@{}}
\toprule
Checkpoint / NFE & Full support & top-$p$ 0.95 & top-$p$ 0.9 & CRS \\
\midrule
Duo-distilled / 8  & 76.89 / 5.269 & 65.60 / 5.282 & 56.11 / 5.255 & 39.57 / 5.136 \\
Duo-distilled / 16 & 54.58 / 5.376 & 44.91 / 5.362 & 38.82 / 5.324 & 31.80 / 5.318 \\
Duo-distilled / 32 & 46.76 / 5.380 & 37.94 / 5.360 & 32.57 / 5.320 & 30.08 / 5.338 \\
Duo-distilled / 64 & 41.79 / 5.351 & 34.34 / 5.326 & 29.76 / 5.296 & 28.27 / 5.317 \\
\midrule
Duo / 8  & 131.66 / 4.897 & 109.31 / 4.938 & 95.32 / 4.943 & 46.04 / 4.755 \\
Duo / 16 &  99.31 / 5.138 &  78.18 / 5.189 & 64.41 / 5.176 & 37.49 / 5.105 \\
Duo / 32 &  85.11 / 5.216 &  65.72 / 5.242 & 54.66 / 5.229 & 37.26 / 5.185 \\
Duo / 64 &  78.31 / 5.231 &  59.53 / 5.250 & 47.20 / 5.220 & 35.75 / 5.184 \\
\bottomrule
\end{tabular}
\caption{NFE scaling under common argmax decoding in the final step. Each
entry is GenPPL / mean within-sequence unigram entropy over 256 generations.
The architecture, tokenizer, and sequence length are shared; the checkpoint is
the only model-level change across the horizontal rule. The dense base-Duo
64-NFE frontier in Table~\ref{tab:frontier-values-app} is an independent launch
rather than a pooled estimate.}
\label{tab:nfe-scaling-app}
\end{table}

\begin{table}[H]
\centering
\small
\setlength{\tabcolsep}{3.5pt}
\begin{tabular}{@{}lrrrr@{}}
\toprule
Checkpoint / NFE & Full support & top-$p$ 0.95 & top-$p$ 0.9 & CRS \\
\midrule
Duo-distilled / 8  & 0.84 / 0.006 & 0.79 / 0.005 & 0.31 / 0.008 & 0.11 / 0.005 \\
Duo-distilled / 16 & 0.43 / 0.001 & 0.26 / 0.004 & 0.18 / 0.006 & 0.26 / 0.004 \\
Duo-distilled / 32 & 0.64 / 0.007 & 0.59 / 0.010 & 0.51 / 0.009 & 0.18 / 0.008 \\
Duo-distilled / 64 & 0.84 / 0.015 & 0.39 / 0.014 & 0.19 / 0.009 & 0.53 / 0.015 \\
\midrule
Duo / 8  & 1.61 / 0.003 & 0.99 / 0.003 & 1.15 / 0.002 & 0.19 / 0.012 \\
Duo / 16 & 0.76 / 0.009 & 0.56 / 0.003 & 0.64 / 0.006 & 0.12 / 0.004 \\
Duo / 32 & 1.62 / 0.008 & 1.40 / 0.011 & 0.90 / 0.010 & 0.64 / 0.012 \\
Duo / 64 & 1.62 / 0.011 & 0.64 / 0.009 & 0.77 / 0.009 & 0.62 / 0.012 \\
\bottomrule
\end{tabular}
\caption{Four-shard standard errors for Table~\ref{tab:nfe-scaling-app}.
Each entry is GenPPL SE / unigram-entropy SE. Figure~\ref{fig:nfe-scaling}
shows point estimates without bars for legibility.}
\label{tab:nfe-scaling-se-app}
\end{table}

\paragraph{Paired endpoint audit.}
\mbox{\citet{deschenaux2026psi}} report the final
noise-removal step as an ancestral categorical draw, whereas our main comparison
uses coordinatewise argmax decoding in the final step. To isolate this protocol
difference, Table~\ref{tab:deschenaux-endpoint-audit} reuses each pre-terminal
reverse path and changes only that final output rule. The ancestral base-Duo cells recover the published 32-NFE
operating region at both tested support levels. The effect is not a uniform
quality bonus: argmax lowers entropy in every row, improves native/full-support
GenPPL, and worsens base-Duo GenPPL at $p=0.9$. Endpoint choice therefore
substantially accounts for the apparent numerical discrepancy, but its effect
depends on checkpoint and support restriction.

\begin{table}[H]
\centering
\small
\setlength{\tabcolsep}{3.8pt}
\begin{tabular}{@{}llrrr@{}}
\toprule
Checkpoint & Support & Reported ancestral & Paired ancestral & Paired argmax \\
\midrule
Duo & $p=1.0$ & $96.76\;(5.57)$ & $100.11\;(5.58)$ & $87.62\;(5.23)$ \\
Duo & $p=0.9$ & $44.24\;(5.40)$ & $44.78\;(5.40)$ & $53.57\;(5.22)$ \\
Duo-distilled & $p=1.0$ & $68.35\;(5.54)$ & $61.13\;(5.54)$ & $45.50\;(5.40)$ \\
Duo-distilled & $p=0.9$ & $35.92\;(5.41)$ & $34.90\;(5.42)$ & $33.54\;(5.35)$ \\
\bottomrule
\end{tabular}
\caption{32-NFE endpoint audit. Entries are GenPPL (within-sequence unigram
entropy, shown to the two-decimal precision available in the reported source).
The two paired columns share the pre-terminal path and differ only in
the last output rule; each uses four seed shards and 64 generations. Values in
the reported column are read from the ancestral-sampling tables of
\mbox{\citet{deschenaux2026psi}}. These paired 64-generation cells are
independent of the 256-generation NFE sweep in
Table~\ref{tab:nfe-scaling-app}, so their argmax point estimates need not
coincide.}
\label{tab:deschenaux-endpoint-audit}
\end{table}

\begin{table}[H]
\centering
\footnotesize
\begin{minipage}[t]{0.48\linewidth}
\centering
\textbf{Duo-distilled}\\[2pt]
\begin{tabular}{@{}lrr@{}}
\toprule
Control & GenPPL & Entropy \\
\midrule
$p=1.0$ & $41.79\pm0.84$ & $5.351\pm0.015$ \\
$p=0.95$ & $34.34\pm0.39$ & $5.326\pm0.014$ \\
$p=0.9$ & $29.76\pm0.19$ & $5.296\pm0.009$ \\
$p=0.85$ & $25.94\pm0.34$ & $5.251\pm0.013$ \\
$p=0.8$ & $23.34\pm0.39$ & $5.205\pm0.016$ \\
$p=0.75$ & $20.38\pm0.23$ & $5.148\pm0.017$ \\
\midrule
$D=128$ & $33.09\pm0.69$ & $5.336\pm0.014$ \\
$D=256$ & $30.54\pm0.62$ & $5.328\pm0.015$ \\
$D=512$ & $28.27\pm0.53$ & $5.317\pm0.015$ \\
$D=1024$ & $27.62\pm0.42$ & $5.312\pm0.014$ \\
\bottomrule
\end{tabular}
\end{minipage}
\hfill
\begin{minipage}[t]{0.48\linewidth}
\centering
\textbf{Duo}\\[2pt]
\begin{tabular}{@{}lrr@{}}
\toprule
Control & GenPPL & Entropy \\
\midrule
$p=1.0$ & $80.35\pm2.28$ & $5.233\pm0.011$ \\
$p=0.95$ & $59.53\pm0.64$ & $5.250\pm0.009$ \\
$p=0.9$ & $48.42\pm1.82$ & $5.221\pm0.014$ \\
$p=0.85$ & $39.86\pm0.73$ & $5.195\pm0.009$ \\
$p=0.8$ & $32.34\pm0.72$ & $5.114\pm0.009$ \\
$p=0.75$ & $28.33\pm0.64$ & $5.075\pm0.012$ \\
\midrule
$D=128$ & $51.96\pm0.78$ & $5.243\pm0.011$ \\
$D=256$ & $41.99\pm0.46$ & $5.216\pm0.011$ \\
$D=512$ & $35.75\pm0.62$ & $5.184\pm0.012$ \\
$D=1024$ & $34.60\pm0.53$ & $5.170\pm0.014$ \\
\bottomrule
\end{tabular}
\end{minipage}
\caption{Mean $\pm$ SE over four seed shards for the 64-NFE operating points.
All generations have length 1024. The base-Duo top-$p$ curve combines
protocol-aligned runs: $p=0.95$ comes from the fixed-procedure NFE sweep and
the remaining thresholds come from the denser 64-NFE sweep. Those denser reruns
give $80.35/48.42$ rather than $78.31/47.20$ GenPPL for $p=1.0/0.9$ in
Table~\ref{tab:nfe-scaling-app}; we do not pool the launches. The base-Duo
comparison is therefore descriptive.}
\label{tab:frontier-values-app}
\end{table}

The 8-NFE CRS cells commit $799.94$ positions on Duo-distilled and $795.50$ on
Duo on average. With a one-third warmup, both begin selection after two steps.
The frozen path-total budget is therefore aggressive relative to the available
low-NFE trajectory. These points diagnose one fixed procedure under reduced
compute rather than a tuned low-NFE frontier.

The base-Duo top-$p$ curve folds between $p=1.0$ and $p=0.95$ in unigram
entropy. We therefore do not invert that segment. Quantitative matched checks
use the monotone $p\le0.9$ branch and $D_{\max}\in\{256,512,1024\}$; the
$D_{\max}=128$ point is displayed only as a descriptive endpoint.

\subsection{How do the checkpoints differ in selection?}
\label{app:checkpoint-selector}

We evaluate both checkpoints on the same exact uniform-forward states and let
each checkpoint select 25 positions by Top-1 confidence with radius-four
exclusion. Both checkpoints are then evaluated on both selected sets. Across
$t\in\{0.7,0.67,0.6,0.5,0.4,0.2\}$, Duo-distilled self-selection has clean-label
accuracy $0.9$--$2.1$ percentage points higher than base-Duo self-selection. The selected-position Jaccard index
is only $0.43$--$0.53$. In contrast, when the selected positions are held fixed,
the two checkpoints agree on the argmax label at least $99.75\%$ of the time,
and their clean-label accuracy differs by at most $0.25$ points. Thus, in this
shared-state audit, the checkpoints differ mainly in which positions they
select. Once those positions are fixed, their argmax labels and clean-label
accuracies are nearly identical. Both selectors remain
overconfident: mean confidence is nearly one while selected-token error remains
$3.5$--$8.1\%$. This shared-state diagnostic does not include writeback or a
generated trajectory, so it does not attribute the full free-running endpoint
difference.

\subsection{Sampler-level warmup-by-duration ablation}
\label{app:warmup-duration}

We vary the full-support warmup fraction and the visibility horizon on
Duo-distilled at 64 NFE. The warmup fractions
$0.016,0.17,0.33,0.50$ correspond to $1,11,21,32$ warmup steps. Every
persistent cell uses $D_{\max}=512$. All cells retain full clean-token support
at every step and use argmax decoding in the final step. The cells share pair
seeds and RNG resets within each shard. Changing the requested warmup also
changes the two-piece reverse time grid, even when $H=0$ disables selection;
the $H=0$ rows therefore control for this schedule split rather than producing
bitwise-identical paths. Four paired seed shards contribute 256 generations
per cell.

\begin{table}[H]
\centering
\footnotesize
\setlength{\tabcolsep}{4.0pt}
\begin{tabular}{@{}crrrr@{}}
\toprule
Warmup steps & Native GenPPL & CRS GenPPL & Native entropy & CRS entropy \\
\midrule
1  & $42.15\pm0.98$ & $20.26\pm0.18$ & $5.359\pm0.014$ & $5.198\pm0.013$ \\
11 & $41.87\pm0.78$ & $25.93\pm0.27$ & $5.357\pm0.015$ & $5.297\pm0.013$ \\
21 & $41.79\pm0.84$ & $28.27\pm0.53$ & $5.351\pm0.015$ & $5.317\pm0.015$ \\
32 & $42.18\pm0.74$ & $29.83\pm0.47$ & $5.356\pm0.014$ & $5.328\pm0.014$ \\
\bottomrule
\end{tabular}
\caption{Endpoint means $\pm$ shard SE for the two policies plotted in
Figure~\ref{fig:warmup-permanent}. Native sampling keeps every position
revisable; \CRS{} keeps each selected label visible for all remaining
steps.}
\label{tab:warmup-permanent-endpoints-app}
\end{table}

\begin{table}[H]
\centering
\footnotesize
\setlength{\tabcolsep}{4.0pt}
\begin{tabular}{@{}crrrr@{}}
\toprule
Warmup steps & $\Delta$GenNLL $(8-0)$ & $\Delta$entropy $(8-0)$ &
$\Delta$GenNLL $(\infty-8)$ & $\Delta$entropy $(\infty-8)$ \\
\midrule
1  & $-0.486\pm0.023$ & $-0.1276\pm0.0082$ & $-0.246\pm0.004$ & $-0.0336\pm0.0022$ \\
11 & $-0.265\pm0.010$ & $-0.0486\pm0.0023$ & $-0.214\pm0.005$ & $-0.0123\pm0.0037$ \\
21 & $-0.211\pm0.008$ & $-0.0343\pm0.0020$ & $-0.180\pm0.003$ & $+0.0004\pm0.0013$ \\
32 & $-0.202\pm0.006$ & $-0.0335\pm0.0011$ & $-0.144\pm0.006$ & $+0.0052\pm0.0015$ \\
\bottomrule
\end{tabular}
\caption{Paired longer-horizon minus shorter-horizon contrasts (mean$\pm$SE
over four seed shards). Negative values mean lower evaluator GenNLL or lower
within-sequence unigram entropy under longer visibility.}
\label{tab:warmup-duration-app}
\end{table}

The $H=0$ endpoint is stable under the artificial phase split: GenPPL ranges
from $41.79$ to $42.18$, and unigram entropy ranges from $5.351$ to $5.359$.
This stability localizes the interaction to persistent exposure rather than
the time-grid split.
Delaying selection reduces both the likelihood gain and the entropy cost of
persistent context. At warmup fraction $0.33$, extending $H=8$ to $H=\infty$
leaves unigram entropy nearly unchanged, but the distinct 4-gram
fraction (Dist-4)~\citep{li2016diversity} decreases from $0.9405$ to $0.9359$, the repeated 4-gram
fraction (Rep-4) increases from
$0.0330$ to $0.0367$, and the across-sample similarity score
Self-BLEU~\citep{papineni2002bleu,zhu2018texygen}
increases from $0.2205$ to $0.2248$. The sequence-level movement
shows that unigram entropy understates the full duration cost. This
sampler-level ablation tests the qualitative warmup mechanism; quantitative
calibration of the oracle risk in Proposition~\ref{prop:warmup-blackwell} and
selection of an optimal warmup remain separate questions.

\subsection{Unconditional factorial}

We use `s-sahoo/duo-distilled' with the GPT-2 tokenizer. Every cell has 64 NFE,
length 1024, warmup fraction $0.33$, full-support warmup, canonical confidence
and spacing selection, $D_{\max}=512$, spacing radius four, float64 sampling
arithmetic to avoid the finite-precision categorical-sampling boundary effects
discussed by \citet{zheng2025secretly}, per-step count cap 25, and argmax decoding in the final step. Each of four seed shards
$8200,8210,8220,8230$ contributes 64 paired samples per cell, for 256 total. Initial states
and RNG streams are reset within each pair.
The released Duo top-$p$ implementation retains the first label beyond an
exact cumulative-mass equality; this floating-point boundary convention does
not affect the conversion identity in Section~\ref{sec:top-p-loo},
which conditions on the retained set actually used.

GenNLL is the mean next-token negative log likelihood under GPT-2-Large and
GenPPL is its exponential. Pythia-1.4B supplies the second evaluator. Entropy
is the mean, over generated sequences, of each sequence's empirical unigram
token entropy. It is distinct from an across-sequence distributional entropy.
For a multiset $\mathcal X$ of generated sequences and the multiset
$\mathcal G_n(x)$ of $n$-grams in $x$, our conventions are
\[
 \operatorname{Dist}\text{-}n
 =\frac{|\bigcup_{x\in\mathcal X}\operatorname{uniq}\mathcal G_n(x)|}
 {\sum_{x\in\mathcal X}|\mathcal G_n(x)|},
 \qquad
 \operatorname{Rep}\text{-}n
 =\frac1{|\mathcal X|}\sum_{x\in\mathcal X}
 \left(1-\frac{|\operatorname{uniq}\mathcal G_n(x)|}
 {|\mathcal G_n(x)|}\right).
\]
Self-BLEU-4 compares each
generation with the remaining generations. Higher entropy and Dist-4 indicate
broader lexical usage; lower Rep-4 and Self-BLEU indicate less repetition or
greater across-sample variation.
All entropy and NLL values use natural logarithms.
Because sentence-level BLEU depends on tokenization and
smoothing~\citep{post2018clarity,chen2014smoothing}, the unconditional table
uses leave-one-out GPT-2-tokenized Self-BLEU with equal weights over one- to
four-grams and add-one smoothing. The fixed-prefix analysis uses the same
weights with method-1 smoothing within each prompt group.

\begin{table}[H]
\centering
\small
\setlength{\tabcolsep}{4pt}
\begin{tabular}{@{}ccrrrrr@{}}
\toprule
Late $p$ & $H$ & GenPPL & Entropy & Dist-4 & Rep-4 & Self-BLEU \\
\midrule
1.0 & 0 & 41.79 & 5.351 & 0.9483 & 0.0272 & 0.2113 \\
1.0 & 8 & 33.84 & 5.317 & 0.9405 & 0.0330 & 0.2205 \\
1.0 & $\infty$ & 28.27 & 5.317 & 0.9359 & 0.0367 & 0.2248 \\
0.9 & 0 & 32.79 & 5.332 & 0.9432 & 0.0306 & 0.2207 \\
0.9 & 8 & 26.94 & 5.303 & 0.9346 & 0.0368 & 0.2294 \\
0.9 & $\infty$ & 26.40 & 5.302 & 0.9323 & 0.0389 & 0.2292 \\
0.85 & 0 & 30.58 & 5.321 & 0.9415 & 0.0316 & 0.2235 \\
0.85 & 8 & 25.66 & 5.293 & 0.9328 & 0.0388 & 0.2293 \\
0.85 & $\infty$ & 25.74 & 5.292 & 0.9328 & 0.0391 & 0.2285 \\
0.8 & 0 & 28.38 & 5.302 & 0.9388 & 0.0334 & 0.2275 \\
0.8 & 8 & 24.51 & 5.279 & 0.9308 & 0.0399 & 0.2331 \\
0.8 & $\infty$ & 24.89 & 5.283 & 0.9299 & 0.0414 & 0.2305 \\
\bottomrule
\end{tabular}
\caption{Complete unconditional top-$p$ by visibility-duration factorial. ``Late'' means
that the listed threshold is applied after full-support warmup. Means are over
four paired shards and 256 generations per cell.}
\label{tab:unconditional-full-app}
\end{table}

\subsection{Fixed-prefix transfer}

We use 1024 held-out OpenWebText~\citep{gokaslan2019openwebtext} prefixes, each 50 GPT-2 tokens long, with five
paired 50-token continuations per cell. The 50 prefix coordinates remain fixed
throughout sampling and are excluded from the selector; diversity summaries
use continuation tokens only. The sampler uses 64 NFE, the path-total
confidence-deficit budget $D_{\max}=25$, and argmax decoding in the final step. Reported conditional
perplexity (PPL) scores the continuation given the fixed prefix; conditional
negative log-likelihood (NLL)
is its log-scale counterpart. We evaluate 5120 generations per cell.

\begin{table}[H]
\centering
\small
\setlength{\tabcolsep}{3pt}
\begin{tabular}{@{}lcrrrrr@{}}
\toprule
Top-$p$ schedule & $H$ & GPT-2 & Pythia & Entropy & Dist-4 & Rep-4 \\
\midrule
Late 1.0 & 0 & 128.54 & 110.39 & 3.508 & 0.9606 & 0.0264 \\
Late 1.0 & 8 & 103.50 & 88.79 & 3.471 & 0.9480 & 0.0380 \\
Late 1.0 & $\infty$ & 83.96 & 71.75 & 3.453 & 0.9373 & 0.0469 \\
Late 0.9 & 8 & 83.21 & 71.62 & 3.461 & 0.9402 & 0.0446 \\
Late 0.8 & 0 & 88.43 & 76.71 & 3.488 & 0.9497 & 0.0356 \\
Late 0.8 & 8 & 74.77 & 64.55 & 3.452 & 0.9353 & 0.0482 \\
Late 0.8 & $\infty$ & 73.31 & 62.83 & 3.446 & 0.9341 & 0.0491 \\
\midrule
All 0.8 & 0 & 72.13 & 63.25 & 3.450 & 0.9393 & 0.0424 \\
\bottomrule
\end{tabular}
\caption{Fixed-prefix transfer point estimates. Entries labeled \emph{Late} use full-support
warmup and the listed $p$ afterward; \emph{All} uses that $p$ throughout.
GPT-2 and Pythia columns are conditional perplexity. Diversity summaries use
continuation tokens. Paired shard-level uncertainties for the primary
evaluator-NLL contrasts are reported below.}
\label{tab:conditional-full-app}
\end{table}

Under GPT-2-Large, the paired $H=0\to8$ NLL reductions are
$0.2165\pm0.0061$ at late $p=1.0$ and $0.1678\pm0.0053$ at late $p=0.8$;
the $H=8\to\infty$ reductions are $0.2091\pm0.0050$ and
$0.0198\pm0.0021$. The corresponding paired difference-in-differences are
$0.0487\pm0.0056$ and $0.1893\pm0.0069$ nats per token. Pythia-1.4B gives
interaction estimates $0.0451\pm0.0064$ and $0.1857\pm0.0093$. Positive
interaction values mean that lowering late $p$ attenuates the horizon effect.

For the paired evaluator-NLL contrasts against $p=0.8$ at every step with $H=0$, the
late-$p=0.8,H=8$ policy changes GPT-2/Pythia NLL by
$+0.0361\pm0.0105$/$+0.0208\pm0.0111$ nats per token; the $H=\infty$ policy
changes them by $+0.0163\pm0.0096$/$-0.0064\pm0.0100$. These are policy
contrasts on different generated trajectories and do not estimate the effect
of adding persistent context to the every-step schedule.
Within-prompt continuation Self-BLEU is $0.0378$ for every-step $p=0.8,H=0$,
$0.0351$ for late $p=0.8,H=8$, and $0.0357$ for late $p=0.8,H=\infty$.

\subsection{Support-by-duration interaction}
\label{app:support-duration}

Figure~\ref{fig:space-time} contains a complete unconditional factorial crossing
post-warmup top-$p\in\{1.0,0.9,0.85,0.8\}$ with visibility duration
$H\in\{0,8,\infty\}$. Temporary eight-step context remains useful throughout
that sweep. The marginal gain from extending $H=8$ to $H=\infty$
rapidly disappears as support restriction becomes stronger and becomes
slightly negative at late $p=0.8$. Fixed-prefix continuation has complete
duration contrasts only at $p\in\{1.0,0.8\}$. Its measured long-horizon gain
is also much smaller at $p=0.8$ than at $p=1.0$, although it remains positive;
these two endpoints do not constitute the full factorial.

\begin{figure}[H]
  \centering
  \includegraphics[width=\linewidth]{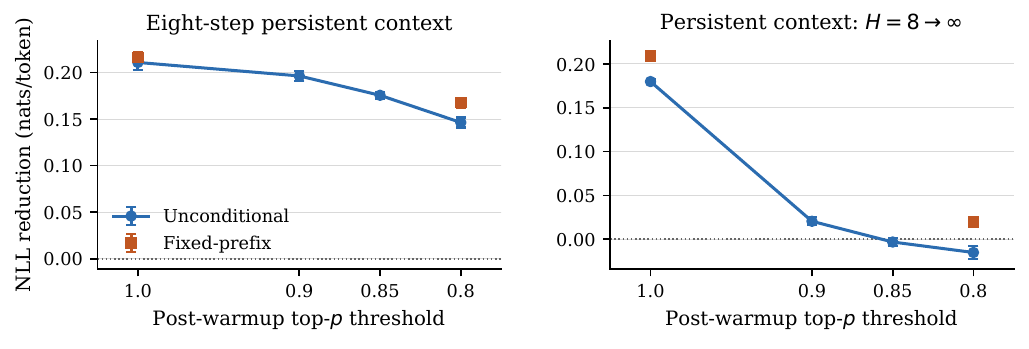}
  \caption{Interaction between post-warmup support restriction and context
  duration. The unconditional series is the complete $4\times3$ factorial.
  The left panel shows the gain from keeping selected context visible
  for eight steps. The right panel shows the additional gain from keeping it
  visible for all remaining steps. Blue lines are unconditional generation; orange
  squares are the two observed fixed-prefix endpoints at $p=1.0$ and $p=0.8$.
  Moving right means stronger
  restriction. Error bars are standard errors over four paired shards.}
  \label{fig:space-time}
\end{figure}

Corollary~\ref{cor:mode-diminishing-returns} motivates this qualitative sign
test. We do not identify its idealized support evidence $S$ or context evidence
$T$ with a checkpoint statistic. The factorial instead asks whether stronger
support restriction attenuates the measured marginal horizon gain. The
corollary assumes exact, mode-consistent evidence, whereas the experiment uses
a learned denoiser on its own free-running trajectory. The slight negative
unconditional endpoint therefore lies outside the theorem's guaranteed regime
rather than contradicting its exact-model statement.

\subsection{Earlier storage-output control}
\label{app:storage-output-control}

This earlier 64-NFE control uses a $p=0.9$ warmup and sweeps post-warmup
top-$p$. Final-only and CRS store the same selected labels and copy them to the
same returned coordinates; they differ in whether those labels are also
inserted into later denoiser inputs. Their paired difference isolates repeated
input visibility beyond output storage. The fully revisable arm instead uses
argmax decoding in the final step and is included as a reference, rather than
as an output-rule-matched third arm.

\begin{figure}[H]
  \centering
  \includegraphics[width=0.9\linewidth]{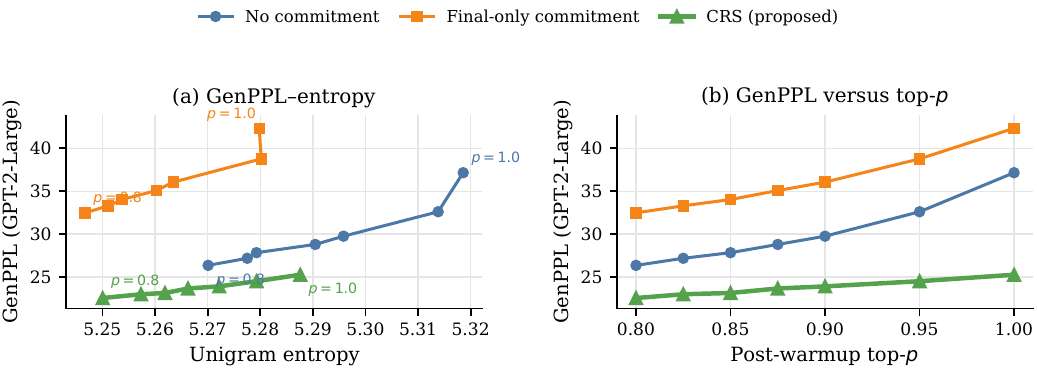}
  \caption{Earlier storage--feedback control. Panel (a) uses unigram entropy
  as the diversity coordinate; panel (b) shows the post-warmup top-$p$ sweep.
  Vertical error bars show GenPPL standard errors over four paired shards; at
  this scale, they are generally smaller than the plotting markers.}
  \label{fig:storage-feedback-app}
\end{figure}

\FloatBarrier

\section{Supporting Mechanism and Frontier Results}
\label{app:mechanism-tables}

\subsection{Scalar-temperature control}

For positive categorical laws $\nu$ and $\mu$, define
$\nu_\beta(a)=\nu(a)^\beta/\sum_b \nu(b)^\beta$. Then
$\mu=\nu_\beta$ for some $\beta\ge0$ exactly when
\begin{equation}
 \log\frac{\mu(a)}{\mu(b)}=\beta\log\frac{\nu(a)}{\nu(b)}
 \qquad\text{for every }a,b.
 \label{eq:temperature-characterization-app}
\end{equation}
The forward direction is immediate. For the reverse direction, fix one
reference label $b$, exponentiate the equality, and normalize. Scalar
temperature preserves rankings and ties for $\beta>0$; the boundary
$\beta=0$ instead maps every positive $\nu$ to the uniform law and collapses all
rankings to ties. Define the irreducible temperature residual by
\begin{equation}
 R_{\mathrm{temp}}(\mu;\nu)
 :=\min_{\beta\ge0}\KL(\mu\kldiv \nu_\beta).
 \label{eq:temperature-residual-app}
\end{equation}
The objective
\[
 f(\beta)=-\Ent(\mu)-\beta\E_\mu[\log \nu(A)]
 +\log\sum_a \nu(a)^\beta
\]
for $A\sim\mu$ is convex because
$f''(\beta)=\operatorname{Var}_{\nu_\beta}[\log \nu(A)]\ge0$. The following
same-state audit minimizes this objective separately at every coordinate, a
stronger control than fitting one global temperature. None of the primary
fitted optima occurs at $\beta=0$.

\subsection{Same-state temperature and token values}

The same-state audit follows full-support \CRS{} ($H=\infty$) paths at 64 NFE. In
each of four shards it uses eight paths, seven post-warmup steps, and eight
deterministically spaced active coordinates per step, for 1792 primary
coordinate--step pairs. At pair $j$, let $h_j^{\mathrm{ctx}}$ be the native
categorical denoiser output under the current input overwrite and let $h_j^0$ be the
output from the identical mutable state and time after removing all current
input overwrites. The audited coordinate itself is uncommitted and
has the same input token in both inputs. We fit
\begin{equation}
 \widehat\beta_j
 :=\arg\min_{\beta\in[0,4096]}
 \KL\!\left(h_j^{\mathrm{ctx}}\kldiv(h_j^0)_\beta\right),
 \qquad
 R_j:=\KL\!\left(h_j^{\mathrm{ctx}}
 \kldiv(h_j^0)_{\widehat\beta_j}\right).
 \label{eq:same-state-fit-app}
\end{equation}
For shard $s$, the reported residual fraction is the pooled ratio
\begin{equation}
 \mathcal R_s
 :=\frac{\sum_{j\in\mathcal J_s}R_j}
 {\sum_{j\in\mathcal J_s}
 \KL(h_j^{\mathrm{ctx}}\kldiv h_j^0)}.
 \label{eq:same-state-pooled-residual-app}
\end{equation}
Float64 log probabilities and 48 bisection steps solve the convex fit; no
primary optimum reaches either boundary; in particular, none uses
$\beta=0$, the uniform-reference limit.
The mean and SE of $\mathcal R_s$ over the four shards are
$79.4\pm2.1\%$. The top-ranked token differs between
$h_j^{\mathrm{ctx}}$ and $h_j^0$ on $30.2\pm0.8\%$ of coordinates, and
$99.89\pm0.06\%$ contain a pairwise rank inversion in the union of their two
top-64 sets (log-probability tie tolerance $10^{-12}$). The initial-step
empty-context control has zero top-rank changes and zero rank inversions.

At full clean-token support, the value intervention holds the selected positions,
transition streams, and argmax decoding in the final step fixed. Argmax values give
$28.27\pm0.53$ GenPPL. Replacing them by the current second-ranked token gives
$77.75\pm1.80$, and sampling a top-$p=0.9$ alternative after excluding the
argmax gives $78.36\pm1.56$. The refreshed-value branch, which keeps positions
visible but updates values from each current argmax, gives
$26.42\pm0.39$. Token identity matters, while immutable identity is not
required for hard visible context to alter the path.

\subsection{Hidden-coordinate support}

The later-support audit uses four shards, eight \CRS{} ($H=\infty$) paths per
shard, and up to eight spaced commitment events per path. At offsets $1,2,4,$
and $8$, it evaluates the stored label under the overwritten input, the same
\CRS{} trajectory with only the audited coordinate restored to its mutable
noisy state, and a paired fully revisable trajectory. The restored-coordinate
view is an input intervention, not exact marginalization over that coordinate.

For the restored-coordinate view, stored-label probabilities at offsets
$1,2,4,8$ are $0.7483\pm0.0060$, $0.8323\pm0.0044$, $0.8230\pm0.0052$, and
$0.8291\pm0.0009$; paired revisable values are $0.4939\pm0.0262$,
$0.5542\pm0.0359$, $0.5481\pm0.0422$, and $0.5630\pm0.0430$. Restoration changes the
coordinate in $144,114,108,96$ of $256,224,224,223$ events. Means and standard
errors use four shards; direct recomputation from the shard files gives
$0.00089$ for the last restored-coordinate SE. Conditional on the restoration
actually changing that coordinate, the restored probabilities are
$0.7300\pm0.0129$, $0.8402\pm0.0209$, $0.8199\pm0.0297$, and
$0.8458\pm0.0230$; the paired revisable probabilities are
$0.4534\pm0.0327$, $0.5295\pm0.0493$, $0.5191\pm0.0486$, and
$0.5378\pm0.0580$. Because other visible context and noisy-state changes
remain, the audit supports a trajectory-mediated rather than isolated account.

\subsection{Matched GenPPL--entropy comparisons}

On Duo-distilled, the full-support \CRS{} ($H=\infty$) sweep varies the confidence budget and
compares each point with a global top-$p$ curve using the same argmax decoding in the final step,
$p\in\{0.75,0.8,0.85,0.9,0.95,1.0\}$. For each shard we interpolate that
shard's observed global curve at that shard's observed CRS diversity and form
the residual before taking the four-shard mean and SE; no extrapolation is
used. Both sweeps were generated in the same launch with aligned pair seeds,
initial states, and RNG resets across cells, so the four interpolated values
are paired shard contrasts. The entropy-matched GenNLL residual retains its
sign across the budget sweep: for
$D_{\max}=128,256,512,1024$, respectively, it is
$-0.089\pm0.029$, $-0.130\pm0.038$, $-0.154\pm0.031$, and
$-0.155\pm0.030$ nats per token.
At $D_{\max}=512$, matching separately on each diversity statistic gives the following table,
whose entries are CRS-minus-global-top-$p$ GenNLL residuals in nats per
token; each column names the diversity statistic used for matching. Negative
values favor CRS.
\begin{center}
\begin{tabular}{@{}lrrrr@{}}
\toprule
Matched summary & Entropy & Dist-4 & Rep-4 & Self-BLEU \\
\midrule
GenNLL residual & $-0.154\pm0.031$ & $-0.025\pm0.048$ & $+0.070\pm0.036$ & $-0.144\pm0.019$ \\
\bottomrule
\end{tabular}
\end{center}